\documentclass[lettersize,journal]{IEEEtran}
\IEEEoverridecommandlockouts
\usepackage{cite}
\usepackage{amsmath,amssymb,amsfonts}
\usepackage{amsthm}
\usepackage{graphicx}
\usepackage{textcomp}
\usepackage{hyperref}       
\usepackage{url}            
\usepackage{booktabs}       
\usepackage{nicefrac}       
\usepackage{microtype}      
\usepackage{xcolor}         
\usepackage{etoolbox}
\usepackage{changepage}
\usepackage{titlesec}
\usepackage{algorithm}
\usepackage{algorithmic}
\usepackage{times}
\usepackage{float}
\usepackage{ragged2e}
\usepackage{subcaption}
\usepackage{verbatim}
\usepackage{diagbox}
\usepackage{threeparttable}
\usepackage{multirow}
\usepackage{multicol}
\usepackage{color, colortbl}
\usepackage{blindtext}
\usepackage{tabularx}
\usepackage[T1]{fontenc}

\theoremstyle{plain}
\newtheorem{theorem}{Theorem}
\newtheorem{proposition}[theorem]{Proposition}
\newtheorem{observation}[theorem]{Observation}

\theoremstyle{definition}

\theoremstyle{remark}

\def\BibTeX{{\rm B\kern-.05em{\sc i\kern-.025em b}\kern-.08em
    T\kern-.1667em\lower.7ex\hbox{E}\kern-.125emX}}
    
\begin{document}

\title{Robust Auto-associative Memory via \\ Convolutional Restricted Hopfield Networks}

\author{\IEEEauthorblockN{Ci Lin, Tet Yeap, Iluju Kiringa}\\
	\IEEEauthorblockA{\textit{Electrical Engineering and Computer Science} \\
		\textit{University of Ottawa}\\
		\{clin072, tyeap, iluju.kiringa\}@uottawa.ca} 
}

\markboth{Journal of \LaTeX\ Class Files,~Vol.~18, No.~9, September~2020}%
{How to Use the IEEEtran \LaTeX \ Templates}

\maketitle

\begin{abstract}
	Associative memory models play a fundamental role in pattern retrieval, but their performance often degrades under adversarial perturbations and severe input corruptions. Existing approaches, including Modern Hopfield Networks (MHNs), and Predictive Coding Networks (PCNs), exhibit limitations in balancing storage capacity, computational efficiency, and robustness. In this paper, we propose a Convolutional Restricted Hopfield Networks (CRHNs), which integrates convolutional feature extraction with attractor-based memory retrieval in a structured latent space. The proposed model leverages subspace representations and fixed-point dynamics, trained via a gradient-free Subspace Rotation Algorithm (SRA), to enhance both robustness and memory capacity. 
	
	Extensive experiments on Self-Taught Learning (STL) dataset demonstrate that CRHNs consistently achieve significantly lower reconstruction error compared to MHNs and PCNs across a wide range of adversarial attacks and input degradations. In many cases, CRHNs reduce reconstruction error by an order of magnitude and maintains stable retrieval performance under increasing perturbation strength. Statistical analysis further confirms that these improvements are significant ($p < 0.01$). These results highlight the effectiveness of attractor-based memory mechanisms and suggest that CRHNs provide a promising framework for building robust and scalable associative memory systems.
\end{abstract}

\begin{IEEEkeywords}
Restricted Hopfield Network, Dense Associative Memory, Predictive Coding Network, Auto-associative Memory, Adversarial Attack, Subspace Rotation Algorithm
\end{IEEEkeywords}

\section{Introduction}\label{sec:intro}

In recent years, several associative memory models have been proposed to address the capability limitations of classical Hopfield Neural Networks (HNNs) \cite{hopfield1982neural, hopfield1984neurons}. Dense Associative Memories (DAMs) introduces stronger nonlinearities in the energy function, aiming to improve the storage capacity and at the same to enhance the robustness \cite{krotov2016dense, krotov2020large}. Modern Hopfield Networks (MHNs) further extend this framework by establishing a connection to the attention mechanism in Transformers, enabling seamless integration into deep learning architectures \cite{ramsauer2020hopfield}. Meanwhile, Predictive Coding Networks (PCNs), inspired by theories of cortical inference, perform memory retrieval through hierarchical error minimization and demonstrate robustness to noise and partial observations \cite{salvatori2021associative, spratling2017review, tang2022associative}. However, although these alternative models achieve significant improvements over the original HNNs, they still struggle to maintain strong robustness against adversarial attacks and severe input corruption.
\begin{figure}[htbp]
	\centering
	\begin{subfigure}[t]{0.45\textwidth}
		\centering
		\includegraphics[width=\linewidth]{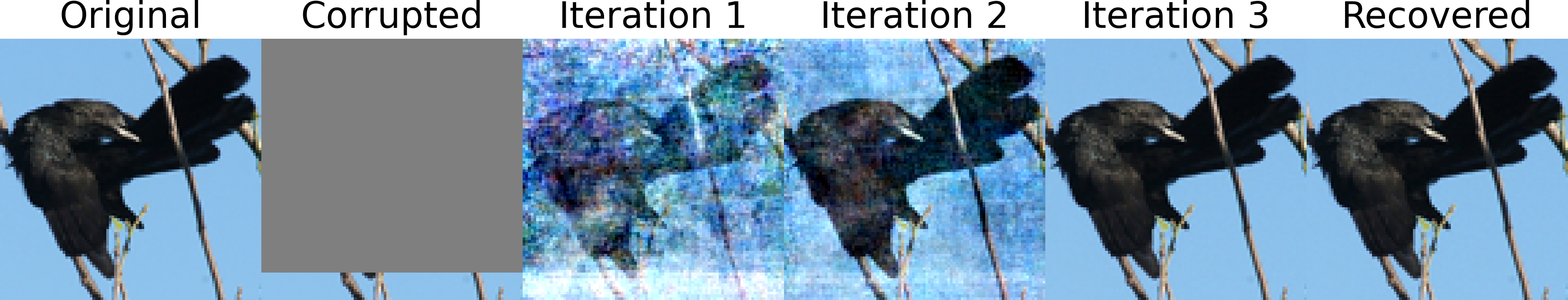}
	\end{subfigure}
	
	\vspace{0.5em}
	
	\begin{subfigure}[t]{0.45\textwidth}
		\centering
		\includegraphics[width=\linewidth]{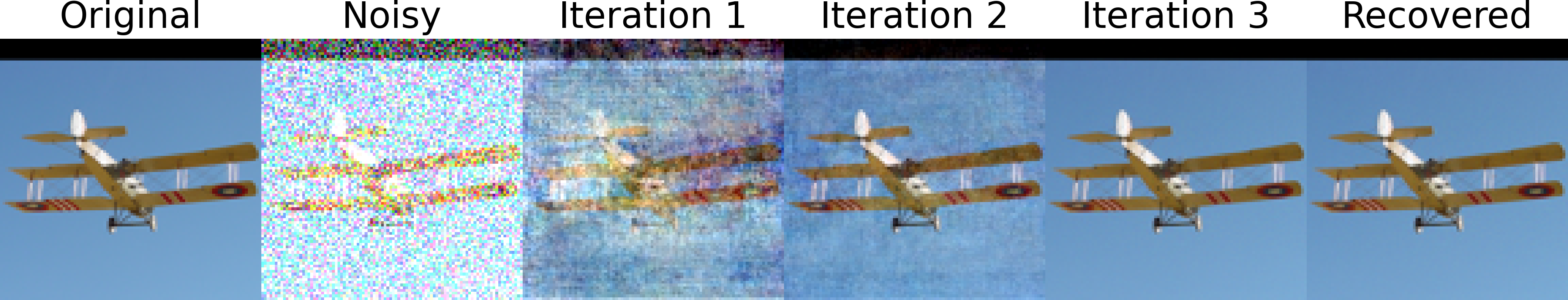}
	\end{subfigure}
	
	\caption{Iterative retrieval dynamics of Convolutional Restricted Hopfield Networks. Given corrupted and noisy inputs, the network progressively refines the patterns over iterations and converges to stored memories.}
	\label{fig:retrieval_dynamics}
\end{figure}
In biological systems, associative memory is a fundamental capability associated with the hippocampus, where recurrent neural dynamics support pattern completion from noisy sensory cues \cite{wahlheim2025memory, marr1971simple, rolls2013mechanisms}. In particular, the CA3 subregion, characterized by dense recurrent connectivity, has long been hypothesized to implement an auto-associative memory system with attractor dynamics \cite{hopfield1982neural, rolls1995model}. Combining both the mechanism of biological system and well-developed deep learning theory,   Restricted Hopfield Networks (RHNs) were introduced as an extension incorporating hidden units and dynamic training mechanisms to enhance storage capacity and retrieval performance \cite{yeap2021implementation, lin2025rhnrobust, lin2023basin, lin2024subspace}. It is observed that RHNs demonstrates strong resistance to adversarial attack and can easy retrieved the complete patterns from highly corrupted and noisy patterns \cite{goodfellow2014explaining, mkadry2017towards, lin2025rhnrobust}. While MHNs and PCNs exhibit certain degrees of robustness, their underlying mechanisms limit their effectiveness under strong or targeted attacks. This highlights the potential of RHNs as a foundation for designing adversarially resilient memory systems \cite{lin2025rhnrobust}.

Although original RHNs demonstrate highly resistant to the adversarial perturbation or naturally corruption. It is not straightforward to expand it into continuous samples and handle high-dimensional data. Therefore, this paper proposed Convolutional Restricted Hopfield Networks (CRHNs), which is closely resembles the pattern completion mechanism observed in the hippocampus, iteratively refines the corrupted or noisy pattern through recurrent interactions until convergence to a stable attractor corresponding to a stored memory. As illustrated in Fig.~\ref{fig:retrieval_dynamics}, this iterative inference process enables robust reconstruction even under severe degradation, providing a computational analogue to biological memory retrieval.

\subsection{Motivation and Contribution}

The study of adversarial robustness is critical for ensuring the reliability and security of deep learning systems in real-world applications \cite{ren2021adversarial}. Vulnerabilities to adversarial perturbations pose significant risks in domains such as healthcare, autonomous systems, and cybersecurity, where errors can have severe consequences \cite{newaz2020adversarial, apruzzese2019addressing, badjie2024adversarial}. Improving robustness not only enhances model reliability but also provides insights into the fundamental limitations of current architectures.

In this work, we propose a novel extension of RHNs, termed the CRHNs, designed to incorporate structured representations for high-dimensional data. The main contributions of this paper are summarized as follows:

\begin{itemize}
	\item A novel CRHNs framework is proposed to integrate convolutional feature extraction with RHNs dynamics. The proposed architecture enables the network to store and retrieve complex image patterns in a compact latent space while preserving spatial structures and semantic information.
	
	\item An iterative retrieval mechanism and subspace-rotation-based training strategy are developed to improve memory stability and robustness. By using orthogonality-preserving left and right subspace rotation updates on RHNs, the proposed method achieves improved convergence behavior and enhanced robustness against corruption, noise, brightness variation, and adversarial perturbations.
	
	\item Extensive experiments are conducted to compare the proposed CRHNs with MHNs and PCNs on STL dataset. Experimental results demonstrate that the proposed method achieves superior reconstruction quality and robustness under severe image degradation and adversarial conditions, highlighting the effectiveness of CRHNs for robust pattern retrieval tasks.
\end{itemize}

\subsection{Organization}

The remainder of this paper is organized as follows. Section~\ref{sec:literature_review} reviews related work on associative memory models and adversarial robustness. Section~\ref{sec:preliminary} introduces the background of MHNs and PCNs. Section~\ref{sec:crhn} describes the proposed CRHNs and the corresponding architecture. Meanwhile, it presents the formulation of RHNs, including their dynamical behavior and training via the Subspace Rotation Algorithm (SRA). Section~\ref{sec:simulation} provides experimental results, including evaluations under adversarial attacks and input degradations. Finally, Section~\ref{sec:conclusion} concludes the paper and discusses future research directions.

\section{Literature Review}\label{sec:literature_review}

\subsection{Auto-Associative Memory}
One of the earliest and most influential models in auto-associative memory is the HNNs, introduced by Hopfield in 1982 \cite{hopfield1982neural}. The HNNs formulate memory storage as an energy minimization process, where stored patterns correspond to attractors of the energy landscape. Despite its theoretical elegance and biological inspiration, the classical HNNs suffer from limited storage capacity, which scales as approximately $0.15N$ for $N$ neurons, as well as the presence of spurious attractors \cite{munakata2004hebbian, mceliece1987capacity}. To address these limitations, early efforts explored alternative learning rules and capacity-enhancing techniques, such as the perceptron-based training approach proposed by Gardner \cite{gardner1988space, gardner1988optimal}, as well as Hebbian learning variants and basin optimization strategies \cite{storkey1999basins}. However, only limited progress was achieved.

More recently, DAMs have significantly advanced the field by introducing higher-order interaction functions. Krotov et al. proposed exponential energy functions that reshape the energy landscape, enabling substantially increased storage capacity and improved pattern separation \cite{krotov2016dense, krotov2018dense}. These models theoretically achieve super-linear or even exponential memory capacity, marking a substantial departure from classical Hopfield networks. However, despite their theoretical advantages, DAMs are often difficult to train efficiently in practice and may not fully realize their capacity limits in real-world applications \cite{bao2022capacity}. Inspired by DAMs, MHNs were introduced, extending classical formulations to continuous state spaces and establishing a direct connection to attention mechanisms in Transformer architectures \cite{ramsauer2020hopfield}. In these models, memory retrieval can be performed in a single step through a softmax-based update rule, enabling efficient and differentiable associative recall. Nevertheless, in practice, they still exhibit limited robustness to noise and corrupted samples.

To improve retrieval selectivity and interpretability, recent studies have explored sparse and structured associative memory formulations. Sparse Hopfield networks introduce sparsity-inducing transformations to enforce more selective retrieval behavior \cite{hu2023sparse}. Furthermore, the Hopfield-Fenchel-Young framework provides a unified perspective that generalizes classical HNNs, DAMs, and MHNs through convex analysis and energy-based formulations \cite{santos2025hopfield}. This framework enables the design of new associative memory models with desirable properties such as sparsity, structured retrieval, and exact recovery guarantees.

Parallel to the development of MHNs, PCNs simulate memory retrieval as an iterative error minimization process across hierarchical layers, where each layer attempts to predict the activity of the layer below \cite{salvatori2021associative}. PCNs have demonstrated strong performance in memory reconstruction tasks and offer a biologically plausible interpretation of cortical processing. However, their reliance on iterative inference increases computational cost compared to the one-step retrieval mechanisms in MHNs \cite{tang2023recurrent, tscshantz2023hybrid}. Additionally, feature-space associative memory models improve upon traditional approaches by computing similarity in learned embedding spaces rather than raw input space, enabling more robust retrieval on complex data across different contexts \cite{salvatori2023feature}. Meanwhile, recent work on continuous attractor models shows that even when exact attractor structures are fragile, approximate dynamics—such as slow manifolds—can still provide stable and functionally meaningful memory representations over finite time scales \cite{sagodi2024back}. Together, these perspectives extend associative memory from discrete fixed-point retrieval to more flexible, representation-driven, and dynamical frameworks.

Coinciding with these developments, RHNs have been proposed as an extension of classical HNNs by incorporating hidden layers, similar in spirit to Restricted Boltzmann Machines (RBMs) \cite{yeap2021implementation, fischer2012introduction}. The introduction of hidden representations enhances the expressive power of the network, leading to improved storage capacity and robustness to noise \cite{lin2023basin, halabian2021enhanced, li2021global, lin2024subspace}. Building upon this idea, convolutional extensions further integrate feature extraction and memory retrieval, enabling associative memory models to operate effectively on high-dimensional data such as images.

\section{Preliminaries}\label{sec:preliminary}
In this section, we introduce the fundamental concepts and training methodologies of MHNs and PCNs to facilitate subsequent discussions.

\subsection{Modern Hopfield Networks}

MHNs extend the classical HNNs by introducing a continuous-state formulation with significantly enhanced storage capacity and retrieval performance. Unlike traditional HNNs that rely on quadratic energy functions, MHNs employ a softmax-based association mechanism, which is mathematically equivalent to the attention mechanism widely used in deep learning architectures \cite{ramsauer2020hopfield}.

MHNs store a set of patterns $\{\xi^{\mu}\}_{\mu=1}^{N}$ and retrieves relevant memories through a similarity-based matching process. Given a query vector $\mathbf{x} \in \mathbb{R}^{d}$ and a memory matrix $\mathbf{M} \in \mathbb{R}^{N \times d}$, the retrieval operation is defined as:

\begin{equation}\label{equ:mhn_update}
	\mathbf{x}' = \sum_{\mu=1}^{N} \text{softmax}\left(\beta \mathbf{x} \cdot \xi^{\mu} \right) \xi^{\mu}
\end{equation}

\noindent
where $\beta$ is a scaling parameter controlling the sharpness of the association, and $\xi^{\mu}$ represents the $\mu$-th stored memory pattern. The softmax function ensures that the retrieved pattern is a weighted combination of stored memories, where weights are determined by similarity.

This formulation can be equivalently written in matrix form:

\begin{equation}\label{equ:mhn_matrix}
	\mathbf{x}' = \text{softmax}(\beta \mathbf{x} \mathbf{M}^T)\mathbf{M}
\end{equation}

\noindent
which reveals that MHNs perform a form of attention over stored patterns. As $\beta \to \infty$, the model approaches nearest-neighbor retrieval, whereas smaller values of $\beta$ produce smoother interpolations among stored patterns.

From an energy-based perspective, MHNs can be derived from a continuous energy function:

\begin{equation}\label{equ:mhn_energy}
	E(\mathbf{x}) = -\frac{1}{\beta} \log \left( \sum_{\mu=1}^{N} \exp\left(\beta \mathbf{x} \cdot \xi^{\mu} \right) \right)
\end{equation}

\noindent
which generalizes the classical Hopfield energy function. The update rule in Equation \ref{equ:mhn_update} corresponds to performing gradient descent on this energy function, ensuring convergence to a stored memory pattern under appropriate conditions.

In practical implementations, MHNs is often integrated into encoder-decoder architectures, where input samples are first mapped into a latent space, followed by associative retrieval using the Hopfield mechanism, and finally reconstructed back to the input space. This latent-space formulation improves scalability and allows the model to handle high-dimensional data such as images.

\subsection{Predictive Coding Networks}

PCNs are inspired by information processing in the neocortex, where hierarchical structures are used to generate predictions and correct sensory inputs. This framework has been shown to be effective for representation learning and associative memory (AM) tasks \cite{salvatori2021associative, yoo2022bayespcn, tang2023recurrent}.

A multi-layer PCN consists of $L$ layers. The first layer corresponds to sensory inputs, the intermediate layers represent latent features extracted from the data, and the topmost layer provides top-down predictions to lower layers. Each layer contains value nodes, which represent neural activities, error nodes, which encode prediction discrepancies, and synaptic weights connecting adjacent layers.

The network minimizes the total prediction error across all layers, defined as
\begin{equation}\label{equ:obj_pcn}
	F = -\frac{1}{2} \sum_{l=1}^L \| \mathbf{e}^{(l)} \|_2^2,
\end{equation}
where the prediction error at layer $l$ is given by
\begin{equation}\label{equ:error_fun}
	\mathbf{e}^{(l)} = \mathbf{x}^{(l)} - \boldsymbol{\rho}^{(l)}.
\end{equation}

The prediction $\boldsymbol{\rho}^{(l)}$ is defined as
\begin{equation}\label{equ:pcn_prediction}
	\boldsymbol{\rho}^{(l)} =
	\begin{cases}
		\Theta^{(l)} f(\mathbf{x}^{(l+1)}) & \text{if } l < L, \\
		\mathbf{0} & \text{if } l = L,
	\end{cases}
\end{equation}
where $f(\cdot)$ is a nonlinear activation function, such as ReLU, $\tanh$, or sigmoid.

The synaptic weights $\Theta^{(l)}$ are updated to minimize the objective function using a Hebbian-like learning rule:
\begin{equation}\label{equ:hebbian_pcn}
	\Delta \Theta^{(l)} = \alpha \, \mathbf{e}^{(l)} f(\mathbf{x}^{(l+1)})^T,
\end{equation}
where $\alpha$ is the learning rate.

During inference, the value nodes are iteratively updated to minimize the objective function via gradient-based dynamics:
\begin{equation}\label{equ:pcn_inference}
	\Delta \mathbf{x}^{(l)} =
	\begin{cases}
		\beta \left( -\mathbf{e}^{(l)} + f'(\mathbf{x}^{(l)}) \Theta^{(l-1)^T} \mathbf{e}^{(l-1)} \right), & 1 < l < L, \\
		0, & l = 1,
	\end{cases}
\end{equation}
where $\beta$ is the integration step size.

\section{Convolutional Restricted Hopfield Networks}\label{sec:crhn}

\begin{figure*}[htbp]
    \centering
    \includegraphics[width=0.90\textwidth]{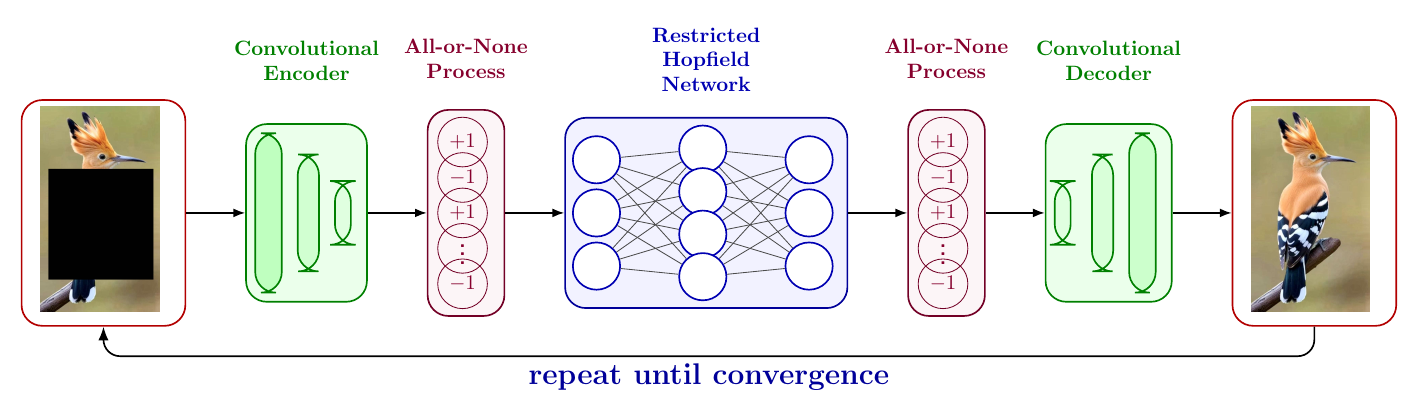}
    \caption{Architecture of Convolutional Restricted Hopfield Networks}
    \label{fig:arch_con_rhn}
\end{figure*}

As shown in Figure \ref{fig:arch_con_rhn}, the CRHNs are formulated as a recurrent dynamical system that integrates convolutional feature extraction into the associative memory. Given an input image, a convolutional encoder maps it into a latent representation, which is then projected onto a discrete interface via a binary activation function to emulate the all-or-none firing behavior of neurons \cite{RETTER2020116685}. The convolutional decoder maps the evolving state back to the image space at each iteration, producing progressively refined reconstructions. This recurrent formulation allows the CRHNs to jointly perform feature extraction, memory retrieval, and reconstruction within a unified iterative framework.

\subsection{Convolutional Encoder and Deconder}
The convolutional and deconvolutional operations in CRHNs form a weight-sharing encoder–decoder architecture that enables structured feature extraction and reconstruction. 

The convolutional encoder maps an input image \(x \in \mathbb{R}^{C \times H \times W}\) into a latent representation through a sequence of convolutional layers. For a single convolutional layer, the feature map is computed as
\[
z_{k}(i,j) = \phi \left( \sum_{c=1}^{C_{\text{in}}} \sum_{u=0}^{K-1} \sum_{v=0}^{K-1}
W_{k,c}(u,v)\, x_c(i+u, j+v) + b_k \right),
\]
where \(W_{k,c}\) denotes the convolutional kernel connecting input channel \(c\) to output channel \(k\), \(K\) is the kernel size, and \(\phi(\cdot)\) is a nonlinear activation function. Stacking multiple such layers yields the encoder mapping \(z = f_{\text{enc}}(x)\), which extracts hierarchical spatial features and compresses the input into a lower-dimensional representation.

To simulate the all-or-none firing behavior of neurons, the latent representation is discretized before entering the associative memory module:
\[
\tilde{z} = \mathrm{sign}(z),
\]
which serves as the interface between the convolutional encoder and the Hopfield network. This discretization introduces a binary decision boundary, while the subsequent Hopfield dynamics remain continuous.

The decoder reconstructs the input using transposed convolution (deconvolution), which expands the latent representation back to the original spatial resolution. For a single layer, the reconstruction is given by
\[
x'_{c}(i,j) = \sum_{k=1}^{C_{\text{out}}} \sum_{u=0}^{K-1} \sum_{v=0}^{K-1}
W_{k,c}(u,v)\, \tilde{z}_k(i-u, j-v),
\]
where the same kernel weights \(W_{k,c}\) are reused. In CRHNs, the encoder and decoder explicitly share the same weight matrix, i.e.,
\[
W_{\text{dec}} = W_{\text{enc}}^\top,
\]
so that the decoder uses the transpose of the encoder weights. This weight-sharing constraint enforces consistency between the encoding and reconstruction processes and reduces the number of free parameters and improve the stability of the dynamical system of CRHNs.

\subsection{Restricted Hopfield Networks}

Once the Convolutional Encoder encode the input images into a latent vector, the latent vector will be fed into the RHNs with \( K \) hidden layers. Without losing generality, let the first layer connected to the Convolutional module be considered as hidden layer 0, so the hidden layers can be indexed as \( h_0, h_1, \cdots, h_K \).

The weight matrix between any two interrelated layers is indexed as \( W_1, W_2, \cdots, W_K \), and the bias terms related to the layers are indexed by \( \theta_0, \theta_1, \cdots, \theta_K \). In the forward path, the signal before activation is represented by \( U \), which is indexed as \( U_1, U_2, \cdots, U_K \), and after the activation function is represented by \( H \), which is indexed as \( H_0, H_1, H_2, \cdots, H_K \). Please note that \( H_0 \) is actually the input signal. In the backward path, the reconstructed signal before activation is represented by \( R \), which is indexed as \( R_{K-1}, \cdots, R_0 \), and after the activation function is represented by \( V \), which is indexed as \( V_K, V_{K-1}, \cdots, V_1, V_0 \). Please note that \( V_K \) is actually \( H_K \).

Then, the dynamical behavior of the RHNs can be described as follows.

\textbf{In the Forward Path:}

\begin{equation}\label{equ:all_forward_path}
    \begin{aligned}
   &\ \frac{dU_{k}(t)}{dt} = W_{k} H_{k-1}(t) + \theta_{k}, \\
   &\ H_{k}(t) = g \odot \left(U_{k}(t)\right), \quad k = 1, 2, 3, \dots, K 
    \end{aligned}
\end{equation}

\noindent
where \( U_k(t) \) is the pre-activation state of layer \( k \), \( H_k(t) \) is the post-activation state, computed using a non-linear activation function \( g \odot (\cdot) \), which is an element-wise operation and is tanh in our study. \( W_k \) is the weight matrix connecting layer \( k-1 \) to \( k \), and \( \theta_k \) is the bias vector for layer \( k \). The state \( U_k(t) \) evolves dynamically over time as the input \( H_{k-1}(t) \) propagates through the network, producing the output \( H_k(t) \) for each layer.

\textbf{In the Backward Path:}

\begin{equation}\label{equ:all_backward_path}
    \begin{aligned}
    &\ \frac{dR_{k}(t)}{dt} = W^{T}_{k+1} V_{k+1}(t) + \theta_k, \\
    &\ V_{k}(t) = g \odot \left(R_{k}(t)\right), \quad k = K-1, \dots, 0
    \end{aligned}
\end{equation}

\noindent
where \( R_k(t) \) is the pre-activation reconstructed state of layer \( k \), \( V_k(t) \) is the post-activation reconstructed state, computed using \( g \odot (\cdot) \), which is an element-wise operation and is tanh in our study. \( W^{T}_{k+1} \) is the transpose of the weight matrix for backpropagating signals from layer \( k+1 \) to \( k \), and \( \theta_k \) is the bias vector for layer \( k \). The backward path evolves over time, ensuring that the reconstruction \( R_k(t) \) and \( V_k(t) \) align with the original data propagated in the forward pass.

\subsection{Stability Analysis}

\begin{proposition}[Lyapunov stability of RHN dynamics]
	\label{prop:rhn_stability}
	Consider Restricted Hopfield Networks (RHNs) with forward and backward dynamics defined in Equations~\ref{equ:all_forward_path} and \ref{equ:all_backward_path}. Define the energy function
	\begin{equation}\label{equ:energy_fun_rhn_prop}
		\begin{aligned}
			E(t) = & -\frac{1}{2} \sum_{k=1}^{K} H_{k-1}(t)^T W_k H_k(t) - \sum_{k=1}^{K} \theta_k^T H_k(t) \\
			& - \frac{1}{2} \sum_{k=0}^{K-1} V_{k+1}(t)^T W_{k+1}^T V_k(t) - \sum_{k=0}^{K-1} \theta_k^T V_k(t).
		\end{aligned}
	\end{equation}
	Then, under standard activation functions such as \(\tanh\), sigmoid, or ReLU, the energy function \(E(t)\) is non-increasing over time, i.e.,
	\[
	\frac{dE(t)}{dt} \leq 0.
	\]
	Consequently, the RHNs dynamics are Lyapunov stable, and the network converges to a set of equilibrium points corresponding to local minima of the energy function.
\end{proposition}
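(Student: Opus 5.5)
The plan is the classical Hopfield/Cohen--Grossberg Lyapunov argument: differentiate $E(t)$ along trajectories and regroup the derivative so that each term is the product of a pre-activation velocity and the corresponding post-activation velocity, with a minus sign. The hoped-for outcome is
\[
\frac{dE}{dt} = -\sum_{k} \Big(\frac{dU_k}{dt}\Big)^{T} \frac{dH_k}{dt} - \sum_{k} \Big(\frac{dR_k}{dt}\Big)^{T}\frac{dV_k}{dt}.
\]
Because the activation $g$ acts element-wise, $\frac{dH_k}{dt} = g'(U_k)\odot \frac{dU_k}{dt}$ and $\frac{dV_k}{dt} = g'(R_k)\odot\frac{dR_k}{dt}$. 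Each term then becomes $-\sum_i g'(U_{k,i})\,(\dot U_{k,i})^2$ (and its $R$ analogue). This is $\le 0$ whenever $g$ is monotone non-decreasing, which holds for $\tanh$, the sigmoid, and ReLU. For ReLU, $g'$ is understood almost everywhere or as a subgradient in $[0,1]$.

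\textbf{Step 1.} I would first apply the product rule to the bilinear terms. For the forward sum, $\frac{d}{dt}\big(H_{k-1}^T W_k H_k\big) = \dot H_{k-1}^T W_k H_k + H_{k-1}^T W_k \dot H_k$. I would then collect every coefficient multiplying $\dot H_k$. The goal is to recognize $W_k H_{k-1} + \theta_k = \dot U_k$ from Equation~\ref{equ:all_forward_path}, and analogously $W_{k+1}^T V_{k+1} + \theta_k = \dot R_k$ from Equation~\ref{equ:all_backward_path}.

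\textbf{Step 2 (main obstacle).} The $\tfrac12$ factors and the one-directional coupling do not match the symmetric form. In the forward path, $H_k$ is driven only by $H_{k-1}$ and not by $H_{k+1}$. This contrasts with a genuine symmetric Hopfield network, in which the energy gradient with respect to $H_k$ collects contributions from both neighbours. My approach is to treat the forward and backward passes as the two halves of one symmetric bidirectional system, using the shared weights $W_{k+1}$ and $W_{k+1}^T$ and the identification $V_K = H_K$. I would pair the cross term $\tfrac12\dot H_{k-1}^T W_k H_k$ from the forward energy with the matching cross term $\tfrac12 V_{k}^{T} W_{k}^{T}\dot V_{k-1}$ from the backward energy. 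Summing the two halves should then supply the missing half of each field.

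This pairing needs a consistency condition to close the gap. One option is to evaluate at the alternating steady regime in which the reconstructions agree with the forward states ($V_k = H_k$, so the network is effectively a symmetric bipartite/layered Hopfield net). The other is to make the weaker claim that the leftover cross terms are exactly the terms of the unrolled symmetric energy. If neither closes cleanly, I would fall back on proving the monotone decrease for the symmetric layered system the RHN approximates. In that case the result follows directly from the standard Hopfield 1984 computation: the fields of all layers combine into the full gradient $-\partial E/\partial H_k$.

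\textbf{Step 3.} Once the derivative has the sign-definite form above, I would argue boundedness. The function $E$ is bounded below because $\tanh$ and sigmoid states lie in a bounded box. For ReLU, boundedness needs either a bounded-trajectory assumption or a leak term, and I would state this caveat explicitly. With $E$ bounded below and non-increasing along trajectories, LaSalle's invariance principle gives convergence to the largest invariant set on which $\dot E=0$. Strict monotonicity of $g$ (positive $g'$) forces $\dot U_k = \dot R_k = 0$ on that set. The limit states are therefore equilibria, which are stationary points of $E$. Stable ones are local minima, since any other equilibrium is not approached from an open set of initial conditions.
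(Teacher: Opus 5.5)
Your Steps~1 and~3 follow the paper's route: apply the chain rule, regroup each term as $-\dot U_k^\top\bigl(g'(U_k)\odot\dot U_k\bigr)$, then use monotonicity of $g$ and boundedness of $E$. The gap is the one you flag in Step~2, and it is real.

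\textbf{The paper does not resolve it.} Its proof writes $dE/dt=\sum_k(\partial E/\partial H_k)\dot H_k+\sum_k(\partial E/\partial V_k)\dot V_k$ and then simply asserts the sign-definite form. That would need $\partial E/\partial H_k=-\dot U_k$ and $\partial E/\partial V_k=-\dot R_k$. Read the bilinear terms in the dimensionally consistent order $H_k^\top W_kH_{k-1}$ and $V_k^\top W_{k+1}^\top V_{k+1}$, since $W_k$ maps layer $k-1$ to layer $k$. A direct computation then gives:
\begin{itemize}
\item $\partial E/\partial H_k=-\tfrac{1}{2}W_kH_{k-1}-\tfrac{1}{2}W_{k+1}^\top H_{k+1}-\theta_k$ for $1\le k\le K-1$;
\item $\partial E/\partial H_K=-\tfrac{1}{2}W_K(H_{K-1}+V_{K-1})-\theta_K$, the extra term arising through $V_K=H_K$;
\item $\partial E/\partial V_k=-\tfrac{1}{2}W_{k+1}^\top V_{k+1}-\tfrac{1}{2}W_kV_{k-1}-\theta_k$, without the middle term when $k=0$.
\end{itemize}
None of these equals the one-sided fields $W_kH_{k-1}+\theta_k$ or $W_{k+1}^\top V_{k+1}+\theta_k$ that drive the cascade.

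\textbf{Your pairing cannot repair this.} It adds a coefficient of $\dot H_{k-1}$ to a coefficient of $\dot V_{k-1}$, which are velocities of different state variables. The two halves combine into a single field only if $H_k\equiv V_k$ along the whole trajectory, and the forward/backward cascade does not preserve that. Your fallback proves a theorem about a different, symmetrically coupled network, not about the stated dynamics.

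\textbf{No argument can close Step~2, because the inequality is false as stated.} Take $K=1$ with scalar layers and $g=\tanh$. Set $W_1=1$, $\theta_1=-\tfrac{3}{4}$, $\theta_0=0$, fixed input $H_0=1$, and start at rest with $U_1(0)=R_0(0)=0$. Then $E=\tfrac{1}{4}H_1-\tfrac{1}{2}H_1V_0$, $\dot U_1=\tfrac{1}{4}$ and $\dot R_0=H_1$. At $t=0$ this gives
\[
\frac{dE}{dt}=\Bigl(\tfrac{1}{4}-\tfrac{1}{2}V_0\Bigr)g'(U_1)\,\dot U_1-\tfrac{1}{2}H_1\,g'(R_0)\,\dot R_0=\tfrac{1}{4}\cdot\tfrac{1}{4}-0=\tfrac{1}{16}>0 .
\]
A correct proposition therefore needs a changed model or hypothesis. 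One option is $K=1$ with the loop closed ($H_0\equiv V_0$). Then $E=-H_1^\top W_1V_0-\theta_1^\top H_1-\theta_0^\top V_0$, and $\partial E/\partial H_1=-\dot U_1$, $\partial E/\partial V_0=-\dot R_0$ hold exactly. Another is to define the dynamics as the gradient flow of a symmetric layered energy.

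\textbf{Step~3 also needs repair.} With a fixed input and no leak term, $\dot U_1=W_1H_0+\theta_1$ is constant, so $U_1$ generically diverges linearly. LaSalle's boundedness hypothesis then fails in $(U,R)$-coordinates, and the limits are saturated states, not equilibria of the ODE. The standard fix is Hopfield's $-U_k/\tau$ leak together with the $\int g^{-1}$ term in the energy.
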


\begin{proof}
	Taking the time derivative of the energy function \(E(t)\), we obtain
	\begin{equation}\label{equ:energy_deri_prop}
		\frac{dE(t)}{dt} = \sum_{k=1}^{K} \frac{\partial E}{\partial H_k(t)} \frac{dH_k(t)}{dt} + \sum_{k=0}^{K-1} \frac{\partial E}{\partial V_k(t)} \frac{dV_k(t)}{dt}.
	\end{equation}
	
	Using the forward and backward dynamics defined in Equations~\ref{equ:all_forward_path} and \ref{equ:all_backward_path}, the derivative can be expanded as
	\begin{equation}\label{equ:derivative_energy_prop}
		\begin{aligned}
			\frac{dE(t)}{dt} 
			& = - \sum_{k=1}^{K} g'\!\left(U_k(t)\right) \left(\frac{dU_k(t)}{dt}\right)^2 \\
			& \quad - \sum_{k=0}^{K-1} g'\!\left(R_k(t)\right) \left(\frac{dR_k(t)}{dt}\right)^2.
		\end{aligned}
	\end{equation}
	
	Since the activation function \(g(\cdot)\) is chosen as \(\tanh\), sigmoid, or ReLU, its derivative satisfies
	\[
	g'(x) \geq 0 \quad \text{for all } x.
	\]
	Therefore, each term in \eqref{equ:derivative_energy_prop} is non-positive, which implies
	\[
	\frac{dE(t)}{dt} \leq 0.
	\]
	
	Hence, the energy function \(E(t)\) is monotonically non-increasing along the system trajectories. Because \(E(t)\) is bounded below, the dynamics converge to a stable equilibrium set corresponding to stationary points of the energy function.
\end{proof}

\subsection{Mathematical Formulation for Subspace Rotation Algorithm}

\begin{proposition}[Subspace Rotation for Optimal Weight Alignment in RHNs]\label{pro:sra_rhn}
	Let a RHN store \( p \) patterns of dimension \( m \), represented by
	\( Y \in \mathbb{R}^{m \times p} \). Assume the weight matrix
	\( W \in \mathbb{R}^{m \times p} \) is orthogonal, and the RHNs output are given by
	\begin{equation}\label{equ:ini_rhn}
		\hat{Y} = \phi(YW) W^T,
	\end{equation}
	where \( \phi(\cdot) \) is a nonlinear activation function applied elementwise.
	Then, aligning \( \hat{Y} \) to \( Y \) under an orthogonal transformation is equivalent to solving
	\begin{equation}\label{equ:optimization}
		\min_{Q^T Q = I_p} \| Y - \hat{Y} Q \|_F,
	\end{equation}
	whose optimal solution is \[Q^\star = U V^T,\]
	where \( U \Sigma V^T \) is the singular value decomposition (SVD) of \( \hat{Y}^T Y \).
\end{proposition}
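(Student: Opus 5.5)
This is the orthogonal Procrustes problem, and I will solve it by reducing the objective to a linear trace maximization over orthogonal matrices.

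\textbf{Reformulating the objective.} The matrices $Y,\hat Y\in\mathbb{R}^{m\times p}$ and the unknown $Q\in\mathbb{R}^{p\times p}$ is orthogonal. The ``equivalence'' part of the statement is essentially definitional. Once $\hat Y=\phi(YW)W^T$ is fixed, aligning $\hat Y$ to $Y$ by a right orthogonal rotation means choosing $Q$ to make $\hat Y Q$ closest to $Y$ in Frobenius norm. Minimizing the norm is the same as minimizing its square, so I work with $\|Y-\hat YQ\|_F^2$.

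\textbf{Expanding the square.} I expand with the trace inner product:
\[
\|Y-\hat YQ\|_F^2=\mathrm{tr}(Y^TY)-2\,\mathrm{tr}(Q^T\hat Y^TY)+\mathrm{tr}(Q^T\hat Y^T\hat YQ).
\]
By cyclicity and $QQ^T=I_p$, the last term equals $\mathrm{tr}(\hat Y^T\hat Y)$. So the first and third terms do not depend on $Q$, and the problem becomes
\[
\max_{Q^TQ=I_p}\ \mathrm{tr}(Q^T M),\qquad M:=\hat Y^TY\in\mathbb{R}^{p\times p}.
\]

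\textbf{The trace maximization.} This is the key step. I write the SVD $M=U\Sigma V^T$ and set $Z:=V^TQ^TU$, which is orthogonal. Then
\[
\mathrm{tr}(Q^TM)=\mathrm{tr}(Q^TU\Sigma V^T)=\mathrm{tr}(V^TQ^TU\Sigma)=\mathrm{tr}(Z\Sigma)=\sum_i Z_{ii}\sigma_i .
\]
Every entry of an orthogonal matrix has absolute value at most $1$, and each $\sigma_i\ge0$. Hence $\mathrm{tr}(Q^TM)\le\sum_i\sigma_i$, with equality when $Z=I_p$, that is, when $Q=UV^T$. This gives $Q^\star=UV^T$.

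\textbf{Uniqueness.} If $M$ has full rank, equality forces $Z_{ii}=1$ for all $i$, so $Z=I_p$ and the minimizer is unique. In the rank-deficient case I will note that $UV^T$ remains \emph{an} optimal solution but need not be the only one.

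\textbf{Main obstacle.} The argument is standard. The only real care needed is in the trace-bound step: getting the transpose orientation right, so that the SVD is taken of $\hat Y^TY$ rather than $Y^T\hat Y$, and stating uniqueness correctly. The orthogonality of $W$ and the form of $\phi$ play no role beyond fixing $\hat Y$.
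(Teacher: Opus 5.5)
Your proof is correct and follows the same route as the paper: you expand the squared Frobenius norm, drop the $Q$-independent terms, and bound $\operatorname{tr}(Q^T\hat{Y}^T Y)=\operatorname{tr}(Z\Sigma)$ with $Z=V^TQ^TU$ using $|z_{ii}|\le 1$. Your uniqueness remark is more careful than the paper, which asserts equality \emph{if and only if} $Z=I_p$; the ``only if'' direction holds only when $\hat{Y}^T Y$ is nonsingular, exactly as you note.
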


\begin{proof}
	Expanding the objective in \eqref{equ:optimization} yields
	\begin{equation}\label{equ:deduction}
		\begin{aligned}
			\| Y - \hat{Y} Q \|_F^2
			&= \| Y \|_F^2 + \| \hat{Y} \|_F^2
			- 2 \operatorname{tr}(Q^T \hat{Y}^T Y).
		\end{aligned}
	\end{equation}
	Since the first two terms are independent of \( Q \), minimizing
	\( \| Y - \hat{Y} Q \|_F \) is equivalent to
	\begin{equation}\label{equ:trace}
		\max_{Q^T Q = I_p} \operatorname{tr}(Q^T \hat{Y}^T Y).
	\end{equation}
	
	Let the singular value decomposition of \( \hat{Y}^T Y \) be 
	\[\hat{Y}^T Y = U \Sigma V^T,\]
	where \( U, V \in \mathbb{R}^{p \times p} \) are orthogonal matrices and
	\( \Sigma = \operatorname{diag}(\sigma_1, \ldots, \sigma_p) \).
	Then,
	\begin{equation}
		\begin{aligned}
			\operatorname{tr}(Q^T \hat{Y}^T Y)
			&= \operatorname{tr}(Q^T U \Sigma V^T)
			= \operatorname{tr}(Z \Sigma)
			= \sum_{i=1}^{p} z_{ii} \sigma_i,
		\end{aligned}
	\end{equation}
	where \( Z = V^T Q^T U \) is also an orthogonal matrix. Since \( |z_{ii}| \le 1 \),
	\[
	\operatorname{tr}(Q^T \hat{Y}^T Y) \le \sum_{i=1}^{p} \sigma_i,
	\]
	with equality if and only if \( Z = I_p \), which implies \( Q = U V^T \).
	This completes the proof \cite{schonemann1966generalized}.
\end{proof}

Therefore, we derive the pseudocode for the SRA applied to RHNs, as presented in Algorithms~\ref{alg:left_sra} and~\ref{alg:right_sra}, referred to as the Left Subspace Rotation Algorithm (LSRA) and the Right Subspace Rotation Algorithm (RSRA), respectively.

\begin{algorithm}
	\caption{Left Subspace Rotation Algorithm}\label{alg:left_sra}
	\begin{algorithmic}
		\STATE \textbf{Input}: Samples $X$, number of layers $N$
		\STATE \textbf{Output}: Weight matrices $\{W_1, \dots, W_N\}$
		\STATE \textbf{Initialize}: Orthogonal weights $\{W_l\}_{l=1}^N$
		
		\FOR{$l \leftarrow N$ \textbf{to} $1$}
		\STATE Compute forward pre-activation $X_l$ at layer $l$
		\STATE Construct backward post-activation $Y_l$ from output to layer $l$
		\STATE $U, \Sigma, V \leftarrow \mathrm{SVD}\left( {X_l}^\top Y_{l} \right)$
		\STATE $W_l \leftarrow U V \, W_l$
		\ENDFOR
		
		\RETURN $\{W_l\}_{l=1}^N$
	\end{algorithmic}
\end{algorithm}

\begin{algorithm}
	\caption{Right Subspace Rotation Algorithm}\label{alg:right_sra}
	\begin{algorithmic}
		\STATE \textbf{Input}: Samples $X$, number of layers $N$
		\STATE \textbf{Output}: Weight matrices $\{W_1, \dots, W_N\}$
		\STATE \textbf{Initialize}: Orthogonal weights $\{W_l\}_{l=1}^N$
		
		\FOR{$l \leftarrow (N + 1)$ \textbf{to} $2$}
		\STATE Compute forward pre-activation $X_l$ at layer $l$
		\STATE Construct backward post-activation $Y_l$ from output to layer $l$
		\STATE $U, \Sigma, V \leftarrow \mathrm{SVD}\left( {X_l}^\top Y_l \right)$
		\STATE $W_{l-1} \leftarrow W_{l-1} \, U V$
		\ENDFOR
		
		\RETURN $\{W_l\}_{l=1}^N$
	\end{algorithmic}
\end{algorithm}

The proposed SRA update the network weights by aligning forward and backward representations at each layer. Specifically, for each layer, the forward pre-activation features and the backward reconstructed features are computed, and their correlation is captured through a cross-covariance matrix. A SVD is then applied to this matrix to extract an optimal orthogonal transformation. In the left subspace rotation algorithm, the weight matrix at each layer is updated by left-multiplying the orthogonal transformation, effectively rotating the output subspace of the layer. In contrast, the right subspace rotation algorithm updates the weights by right-multiplying the transformation, which adjusts the input subspace of the preceding layer. By iteratively applying these rotations across layers, the RHN converges to an optimal configuration that facilitates accurate pattern memorization.

\begin{figure}[htbp]
	\centering
	\includegraphics[width=0.90\linewidth]{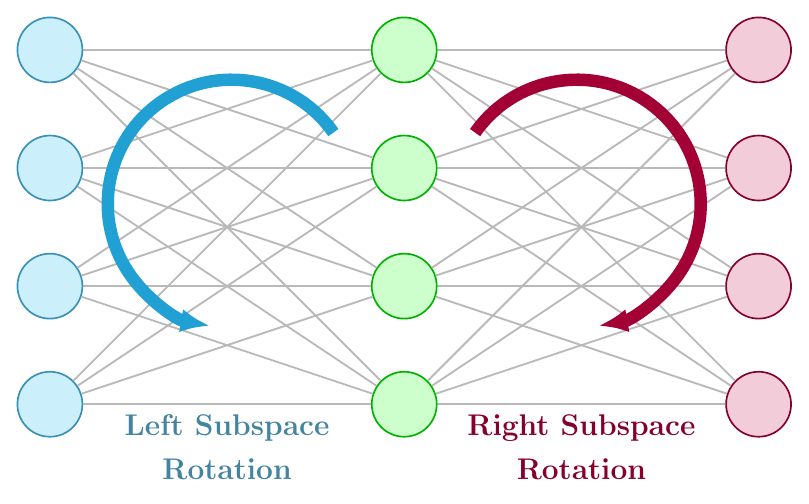}
	\caption{Illustration of Left Subspace Rotation (LSRA) and Right Subspace Rotation (RSRA) in updating the weight matrix of neural network. The orthogonal rotation matrices $U$ and $V$, obtained from the SVD process are applied to the weight matrices from the left or right side.}
	\label{fig:lsra_rsra}
\end{figure}

As shown in Figure \ref{fig:lsra_rsra}, the rotation matrices generated by the SVD process, namely $U$ and $V$ in Algorithm \ref{alg:left_sra} and \ref{alg:right_sra}, can be used to rotate the weight matrix from the left or the right. Empirical experiments indicate that when both LSRA and RSRA are applied in this context, the CRHN achieves better robustness than when only LSRA is used.

It is believed that although LSRA mainly improves the convergence behavior of the CRHN, RSRA rotates the weight matrix in a different orientation, causing the model in the nonlinear context to behave more like an orthogonal projector, which is beneficial for the robustness of the CRHNs, as illustrated in the next section.

\subsection{Robustness Analysis of Convolutional Restricted Hopfield Network}\label{sec:adversarial_mechanism}

\begin{proposition}[Robustness of semi-orthogonal weights to adversarial perturbations]\label{pro:orth_adv}
	Let $W \in \mathbb{R}^{d \times m}$ be a semi-orthogonal matrix such that $W^\top W = I_m$ .
	Consider the RHN retrieval operator $F(x) := \phi(WW^\top x)$, where $\phi(\cdot)$ is applied component-wise and is assumed to be $1$-Lipschitz with respect to the $\ell_2$ norm.
	
	Then, for any input vector $x \in \mathbb{R}^d$ and any perturbation $\delta \in \mathbb{R}^d$, we have
	\begin{equation}
		\|F(x+\delta)-F(x)\|_2 \le \|\delta\|_2 .
	\end{equation}
	
	Moreover, under the iterative RHN retrieval process
	\begin{equation}
		x^{(k+1)} = F(x^{(k)}), \qquad k \ge 0,
	\end{equation}
	the perturbation remains non-amplified after any number of retrieval steps. Specifically, for every integer $t \ge 1$,
	\begin{equation}
		\|F^t(x+\delta)-F^t(x)\|_2 \le \|\delta\|_2 ,
	\end{equation}
	where $F^t$ denotes applying $F$ repeatedly $t$ times.
	
	Therefore, semi-orthogonal weights constrain the RHN retrieval dynamics to be non-expansive, so adversarial perturbations are not enlarged during the retrieval process.
\end{proposition}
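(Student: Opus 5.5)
The plan is to show that $F$ is a composition of two non-expansive maps, and then to iterate that bound.

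\textbf{Step 1: the projection is non-expansive.} Set $P := WW^\top \in \mathbb{R}^{d\times d}$. From $W^\top W = I_m$ we get
\[
P^\top = P, \qquad P^2 = W(W^\top W)W^\top = P ,
\]
so $P$ is the orthogonal projector onto the column space of $W$. Take any $v \in \mathbb{R}^d$ and write $v = Pv + (I-P)v$. The two pieces are orthogonal, because $(Pv)^\top (I-P)v = v^\top (P - P^2)v = 0$. Pythagoras then gives $\|v\|_2^2 = \|Pv\|_2^2 + \|(I-P)v\|_2^2 \ge \|Pv\|_2^2$. Hence $\|P\|_2 \le 1$, with equality whenever $m \ge 1$.

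\textbf{Step 2: the one-step bound.} The map $\phi$ is assumed $1$-Lipschitz in $\ell_2$. For elementwise $\tanh$ this holds because $|\tanh'|\le 1$: each coordinate is $1$-Lipschitz, and summing the squared coordinate differences gives the $\ell_2$ statement. Chaining this with Step 1 gives
\[
\|F(x+\delta)-F(x)\|_2 \le \|P(x+\delta)-Px\|_2 = \|P\delta\|_2 \le \|\delta\|_2 .
\]

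\textbf{Step 3: the iterated bound.} Step 2 holds for arbitrary inputs, not only for $x$ and $x+\delta$. So it says that $F$ is $1$-Lipschitz: $\|F(a)-F(b)\|_2 \le \|a-b\|_2$ for all $a,b\in\mathbb{R}^d$. I induct on $t$, with Step 2 as the base case $t=1$. For the inductive step, apply this Lipschitz bound with $a = F^{t}(x+\delta)$ and $b = F^{t}(x)$:
\[
\|F^{t+1}(x+\delta)-F^{t+1}(x)\|_2 \le \|F^{t}(x+\delta)-F^{t}(x)\|_2 \le \|\delta\|_2 .
\]
The final claim, that perturbations are not enlarged during retrieval, is just a restatement of this bound for all $t \ge 1$.

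\textbf{Main obstacle.} There is no real technical obstacle. The one point needing care is Step 1: the hypothesis $W^\top W = I_m$ does \emph{not} make $WW^\top$ the identity when $m<d$. The operator-norm bound must therefore come from the idempotence and symmetry of $P$, not from treating $W$ as square orthogonal.

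A side remark: the argument can be sharpened. The bound $\|P\delta\|_2 \le \|\delta\|_2$ shows that only the component of $\delta$ lying in the column space of $W$ survives the first step. This indicates that components of $\delta$ orthogonal to the stored subspace are discarded, but that stronger claim is not needed for the proposition.
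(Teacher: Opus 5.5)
Your proposal is correct and follows essentially the same route as the paper's proof. Like the paper, you bound $\|P\delta\|_2 \le \|\delta\|_2$ for the orthogonal projector $P = WW^\top$, chain this with the $1$-Lipschitz property of $\phi$ for the one-step bound, and induct on $t$ using non-expansiveness of $F$. You also justify $\|P\|_2 \le 1$ via symmetry and idempotence, which the paper simply asserts.
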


\begin{proof}
	Define
	\begin{equation}
		P = WW^\top .
	\end{equation}
	Because $W^\top W = I_m$, $P$ is the orthogonal projection matrix onto the column space of $W$, denoted by $\mathrm{span}(W)$. Hence,
	\begin{equation}
		\|P\|_2 = 1,
	\end{equation}
	and consequently, for any perturbation $\delta$,
	\begin{equation}
		\|P\delta\|_2 \le \|\delta\|_2 .
	\end{equation}
	
	Since $\phi$ is $1$-Lipschitz, for any two vectors $u,v \in \mathbb{R}^d$,
	\begin{equation}
		\|\phi(u)-\phi(v)\|_2 \le \|u-v\|_2 .
	\end{equation}
	
	Applying this property to $u=P(x+\delta)$ and $v=Px$, we obtain
	\begin{equation}
		\begin{aligned}
			\|F(x+\delta)-F(x)\|_2
			&= \|\phi(P(x+\delta))-\phi(Px)\|_2 \\
			&\le \|P(x+\delta)-Px\|_2 \\
			&= \|P\delta\|_2 \\
			&\le \|\delta\|_2 .
		\end{aligned}
	\end{equation}
	Thus, one step of RHN retrieval does not increase the perturbation magnitude.
	
	Since the mapping $F$ is non-expansive, composing it with itself preserves the same property. Therefore, by induction, for any $t \ge 1$,
	\begin{equation}
		\|F^t(x+\delta)-F^t(x)\|_2 \le \|\delta\|_2 .
	\end{equation}
	
	This proves that perturbations are not amplified during the iterative RHN retrieval dynamics.
\end{proof}

\begin{observation}[Noise filtering effect of low-dimensional subspaces]
	\label{observation:small_signal_subspace}
	Let $P_m = WW^\top$ be the orthogonal projector onto an $m$-dimensional subspace, where $W^\top W = I_m$.
	
	For isotropic perturbations satisfying $\mathbb{E}[\delta\delta^\top] = \sigma^2 I_d $, the projected perturbation magnitude becomes
	$\mathbb{E}\|P_m\delta\|_2^2 = \sigma^2 \mathrm{tr}(P_m) = \sigma^2 m$.
	
	Hence, a smaller latent subspace dimension $m$ reduces the expected perturbation magnitude propagated through the retrieval dynamics
	$x^{+}=\phi(P_m x)$.
\end{observation}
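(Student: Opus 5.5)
The claim is a second-moment computation, so the plan is to reduce $\mathbb{E}\|P_m\delta\|_2^2$ to a trace and then use the structure of $P_m$. The first step uses the properties of $P_m = WW^\top$ already noted in the proof of Proposition~\ref{pro:orth_adv}. From $W^\top W = I_m$ we get
\[
P_m^\top = P_m, \qquad P_m^2 = W(W^\top W)W^\top = P_m ,
\]
so $P_m$ is a symmetric idempotent. Hence
\[
\|P_m\delta\|_2^2 = \delta^\top P_m^\top P_m \delta = \delta^\top P_m \delta .
\]

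Next I will write this scalar as a trace, $\delta^\top P_m\delta = \mathrm{tr}(P_m\delta\delta^\top)$, using the cyclic property of the trace. Trace and expectation are both linear and commute, so the hypothesis $\mathbb{E}[\delta\delta^\top]=\sigma^2 I_d$ gives
\[
\mathbb{E}\|P_m\delta\|_2^2 = \mathrm{tr}\!\left(P_m\,\mathbb{E}[\delta\delta^\top]\right) = \sigma^2\,\mathrm{tr}(P_m).
\]
Finally, applying cyclicity once more,
\[
\mathrm{tr}(P_m) = \mathrm{tr}(WW^\top) = \mathrm{tr}(W^\top W) = \mathrm{tr}(I_m) = m .
\]
This proves the identity $\mathbb{E}\|P_m\delta\|_2^2=\sigma^2 m$.

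The concluding statement about the dynamics $x^{+}=\phi(P_m x)$ follows from the $1$-Lipschitz argument of Proposition~\ref{pro:orth_adv}. For one step,
\[
\|\phi(P_m(x+\delta))-\phi(P_m x)\|_2 \le \|P_m\delta\|_2 .
\]
Squaring and taking expectations bounds the expected squared propagated perturbation by $\sigma^2 m$. This bound is monotone in $m$, which justifies the claim that a smaller $m$ reduces it.

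No step is a real obstacle; the point needing care is what "propagated through the retrieval dynamics" means. The exact identity $\sigma^2 m$ holds only for the linear projection. After $\phi$, and after several iterations, the result is an upper bound $\le \sigma^2 m$, not an equality. The multi-step bound also needs a separate argument. Iterating Proposition~\ref{pro:orth_adv} only gives $\|F^t(x+\delta)-F^t(x)\|_2\le\|P_m\delta\|_2$, because after the first step the difference of the states is no longer isotropic and is no longer confined to $\mathrm{span}(W)$. I would therefore state the conclusion as an upper bound on the first-step contribution, which is already controlled by $\sigma^2 m$ and is then non-expansively propagated. I would also note that $\mathbb{E}\delta=0$ is not needed, since only the second-moment matrix enters.
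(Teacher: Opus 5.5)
Your argument is correct and essentially matches the paper, which gives no separate proof beyond the one-line trace identity $\mathbb{E}\|P_m\delta\|_2^2=\sigma^2\mathrm{tr}(P_m)=\sigma^2 m$ written into the statement and leaves the ``hence'' about $x^{+}=\phi(P_m x)$ informal; reading that clause through the Lipschitz bound of Proposition~\ref{pro:orth_adv}, as an upper bound monotone in $m$, is the right way to make it precise. One small correction: no separate multi-step argument is needed, since your own inequality $\|F^t(x+\delta)-F^t(x)\|_2\le\|P_m\delta\|_2$ already gives $\mathbb{E}\|F^t(x+\delta)-F^t(x)\|_2^2\le\sigma^2 m$ for every $t\ge 1$, and the loss of isotropy only prevents you from claiming equality or further reduction at later steps.
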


\section{Experiment and Discussion} \label{sec:simulation}

\subsection{Sample Preparation and Training Configuration}

All experiments are conducted on the Self-Taught Learning (STL) dataset \cite{coates2011analysis}, where images are resized to $96 \times 96$ and normalized to the range $[-1,1]$. From the dataset, we construct memory sets of varying sizes, including 50, 100, 250, and 500 images, to evaluate the scalability of different associative memory models. For each setting, the same subset of images is used across all models to ensure a fair comparison.

All models, including CRHNs, MHNs, and PCNs, are trained to memorize the selected samples using identical input representations. PCNs and MHNs follow their standard formulations. The MHNs are trained using the Adam optimizer with a learning rate of $1 \times 10^{-4}$. For PCNs, following \cite{salvatori2021associative}, we adopt the SGD optimizer with a learning rate of $1 \times 10^{-4}$ due to the use of estimated gradients. The training of CRHNs is divided into two stages. First, the encoder–decoder is trained to learn a compact representation of the input data, with shared weights to enforce structural consistency. Second, the latent representations are stored in the RHNs using the SRA, including LSRA and RSRA, which promotes semi-orthogonal weight structures. Each experiment is repeated five times with different random initializations to ensure statistical reliability.

For PCNs, we follow \cite{salvatori2021associative} and adopt an implementation adapted from a publicly available repository\footnote{PCNs: \url{https://github.com/C16Mftang/covariance-learning-PCNs}}. 
For MHNs, we follow \cite{ramsauer2020hopfield} and use the Hopfield layers framework\footnote{MHNs: \url{https://github.com/ml-jku/hopfield-layers}}.

\subsection{Evaluation Metrics}

\begin{figure*}[htbp]
	\centering
	\begin{subfigure}[t]{0.315\textwidth}
		\centering
		\includegraphics[width=\linewidth]{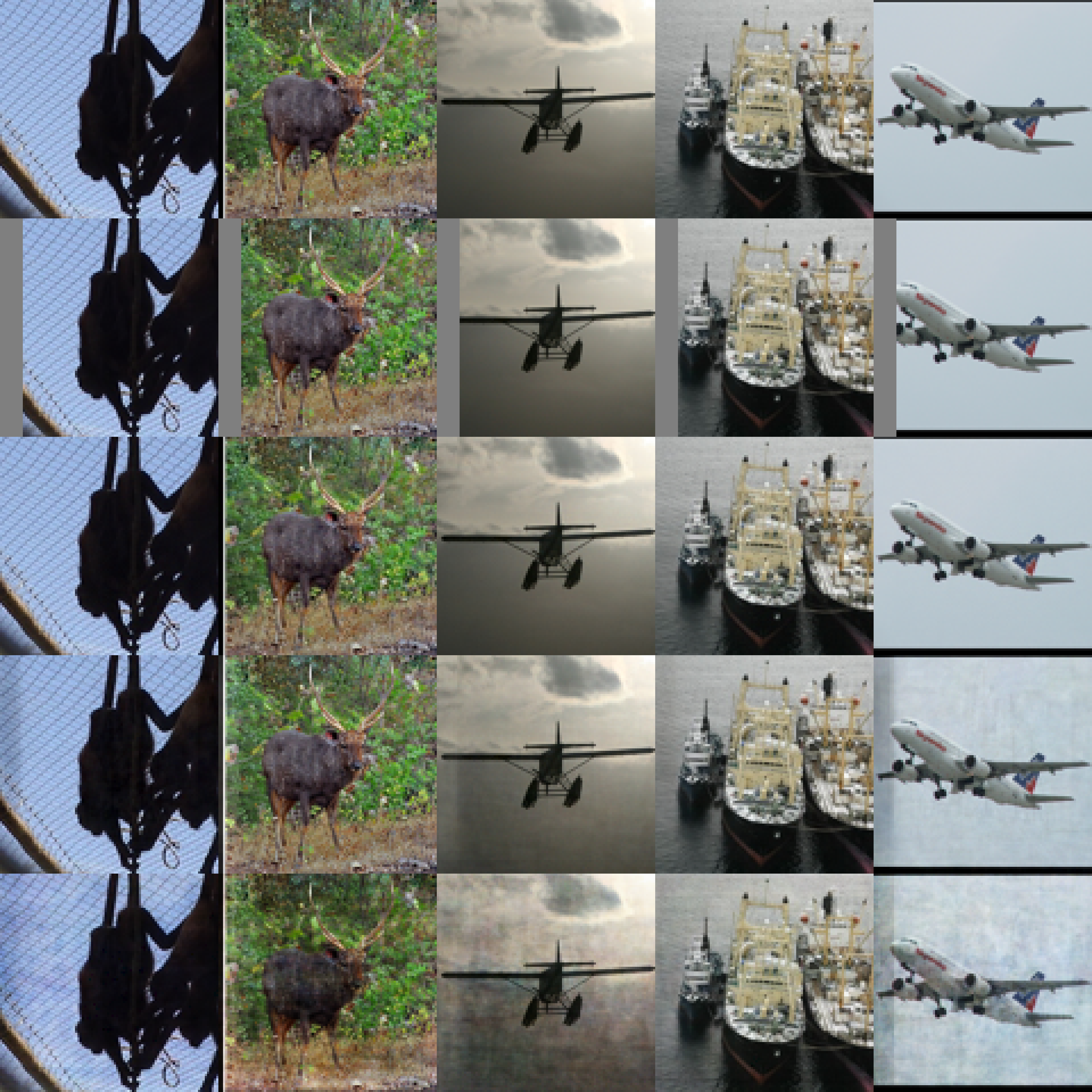}
		\caption{10 Columns are Corrupted}
		\label{fig:retrieval_dynamics_corrupt_10}
	\end{subfigure}
	\hfill
	\begin{subfigure}[t]{0.315\textwidth}
		\centering
		\includegraphics[width=\linewidth]{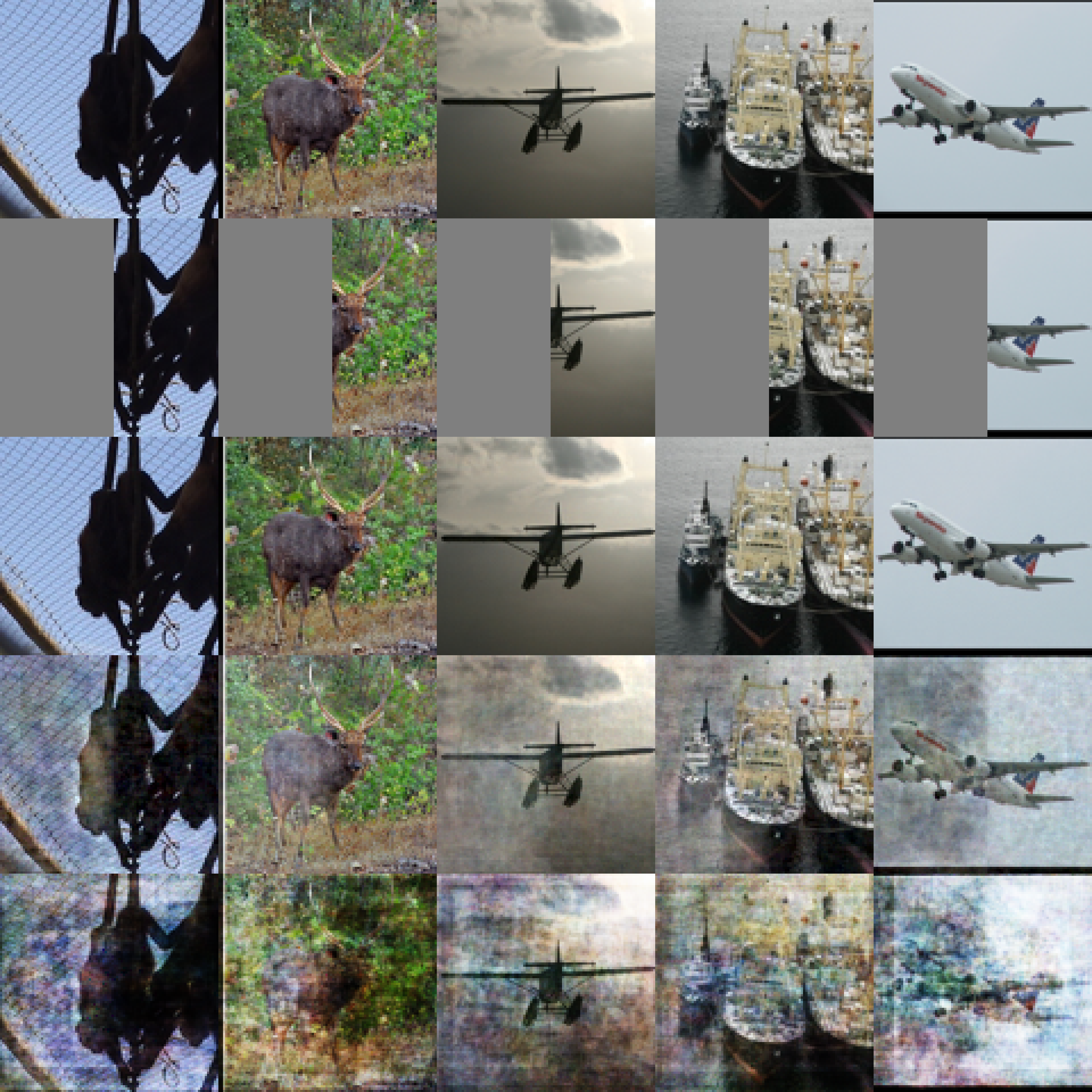}
		\caption{50 Columns are Corrupted}
		\label{fig:retrieval_dynamics_corrupt_50}
	\end{subfigure}
	\hfill
	\begin{subfigure}[t]{0.315\textwidth}
		\centering
		\includegraphics[width=\linewidth]{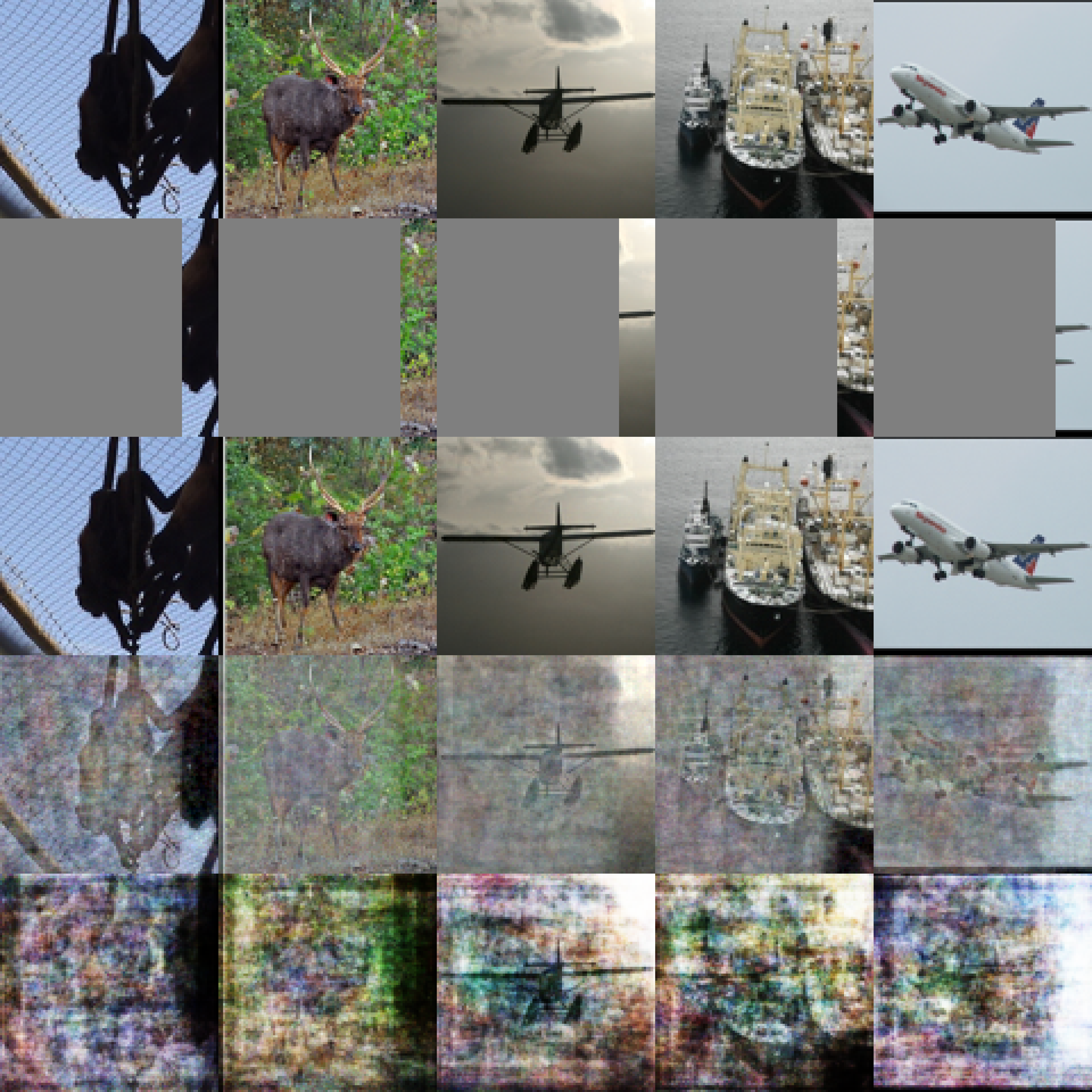}
		\caption{80 Columns are Corrupted}
		\label{fig:retrieval_dynamics_corrupt_80}
	\end{subfigure}
	\vspace{0.1em}
	\begin{subfigure}[t]{0.315\textwidth}
		\centering
		\includegraphics[width=\linewidth]{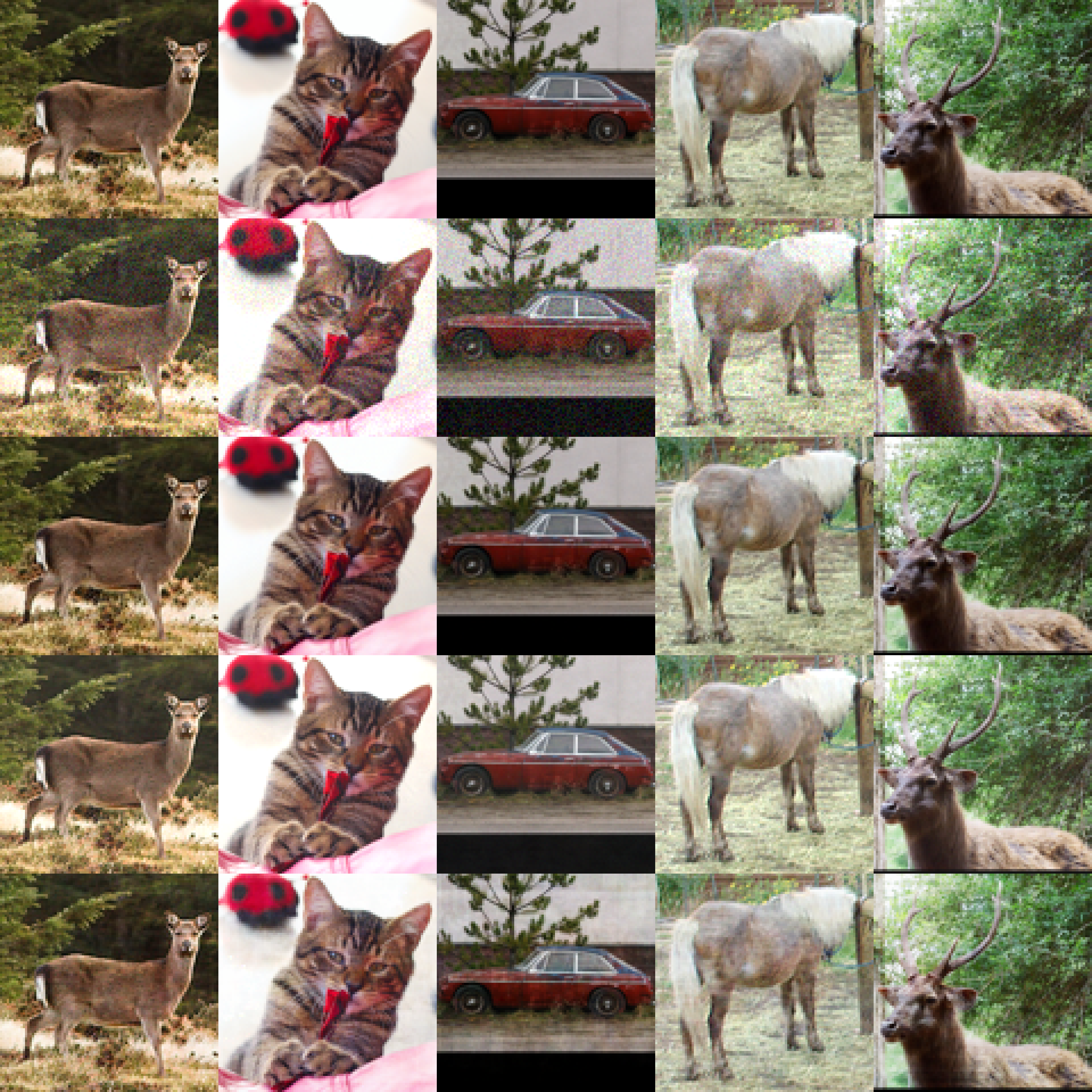}
		\caption{Gaussian Noise $\sigma=0.3$}
		\label{fig:retrieval_dynamics_noise_0.3}
	\end{subfigure}
	\hfill
	\begin{subfigure}[t]{0.315\textwidth}
		\centering
		\includegraphics[width=\linewidth]{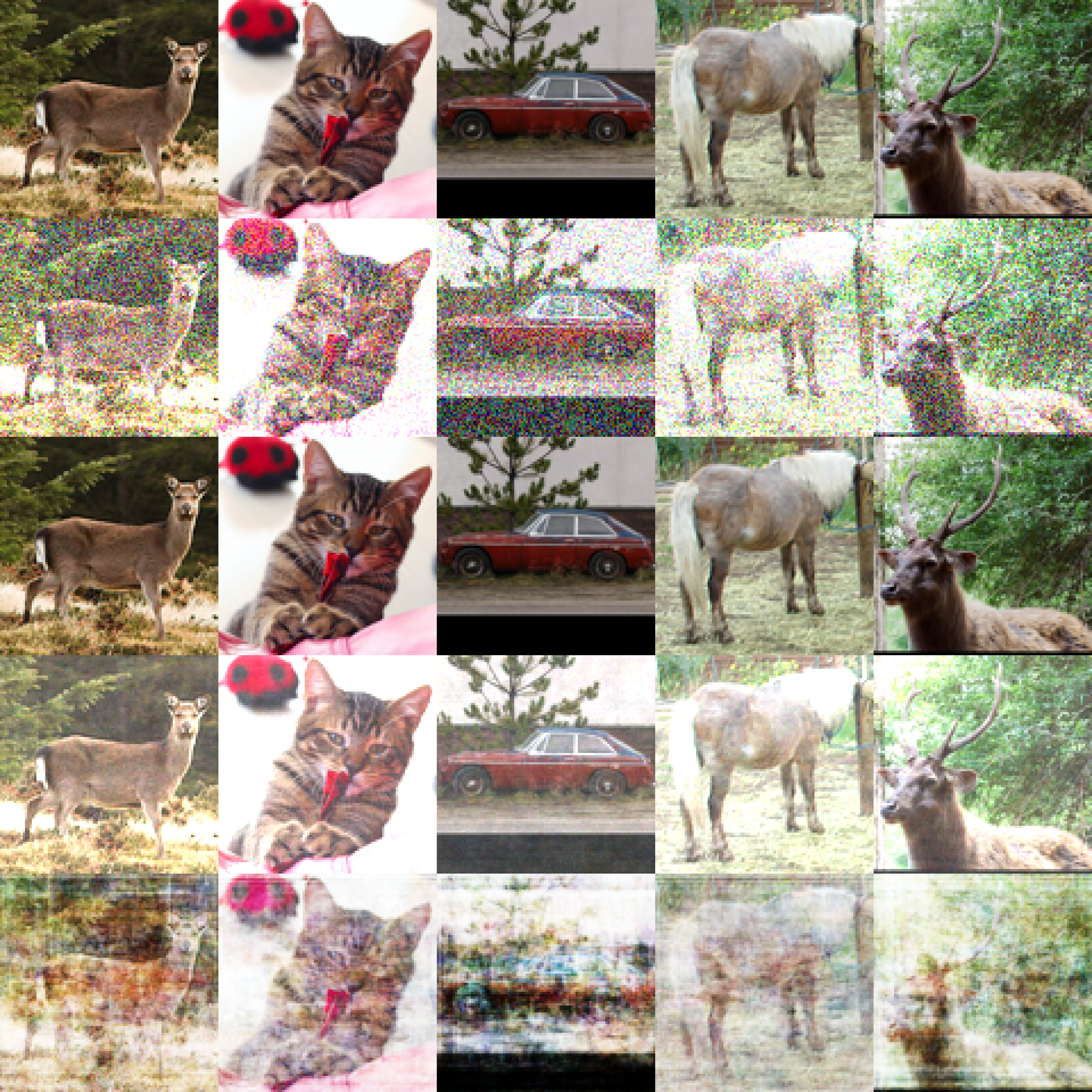}
		\caption{Gaussian Noise $\sigma=1.1$}
		\label{fig:retrieval_dynamics_noise_1.1}
	\end{subfigure}
	\hfill
	\begin{subfigure}[t]{0.315\textwidth}
		\centering
		\includegraphics[width=\linewidth]{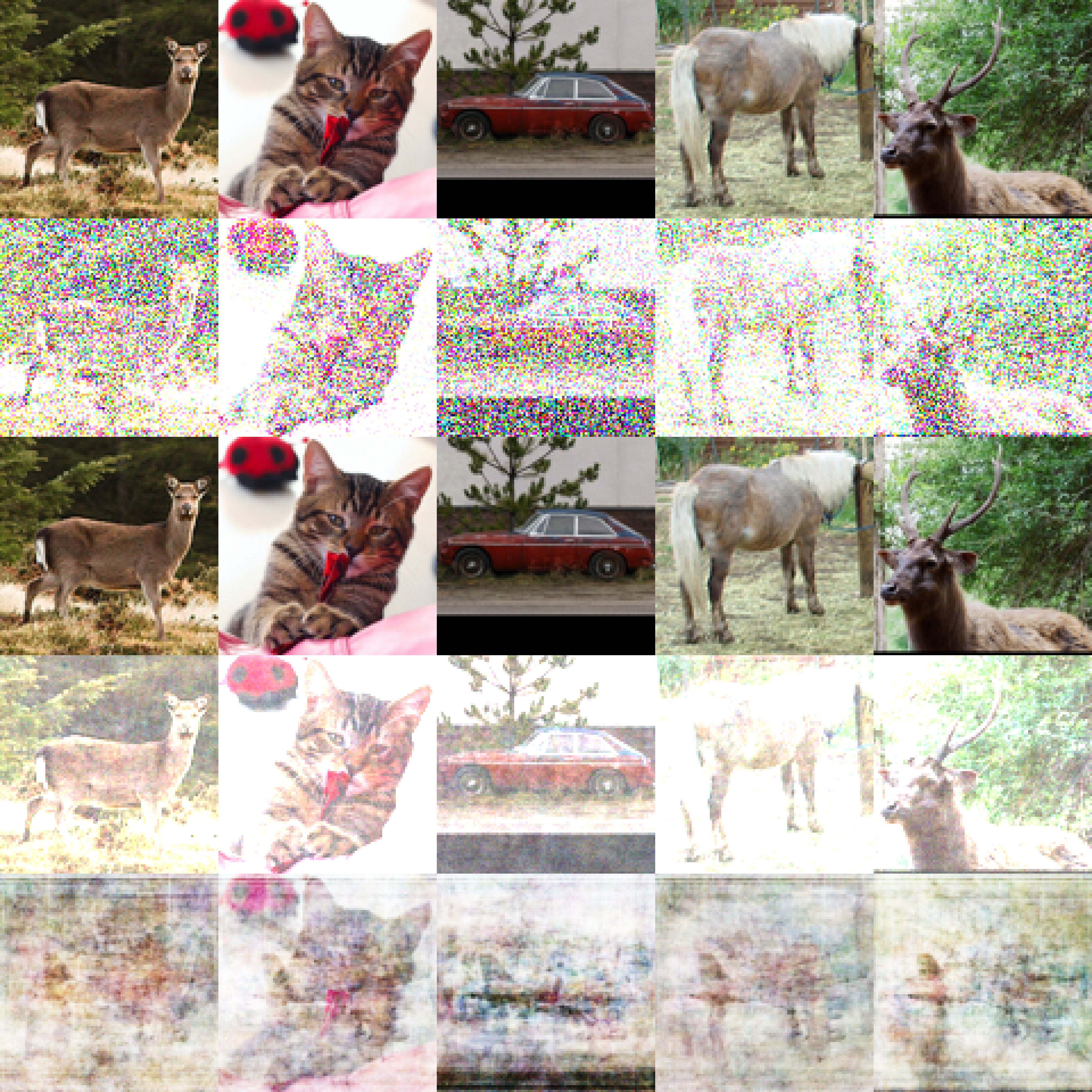}
		\caption{Gaussian Noise $\sigma=2.1$}
		\label{fig:retrieval_dynamics_noise_2.1}
	\end{subfigure}
	\caption{Qualitative retrieval comparison on the STL dataset, where all models are trained to memorize 250 images resized to $96\times96$. From top to bottom, the rows correspond to the original images, the corrupted and noisy inputs, and the retrieval results obtained by CRHNs, PCNs, and MHNs, respectively.}
	\label{fig:retrieval_dynamics_corrupt_noise}
\end{figure*}

\begin{figure*}[htbp]
	\centering
	\begin{subfigure}[t]{0.315\textwidth}
		\centering
		\includegraphics[width=\linewidth]{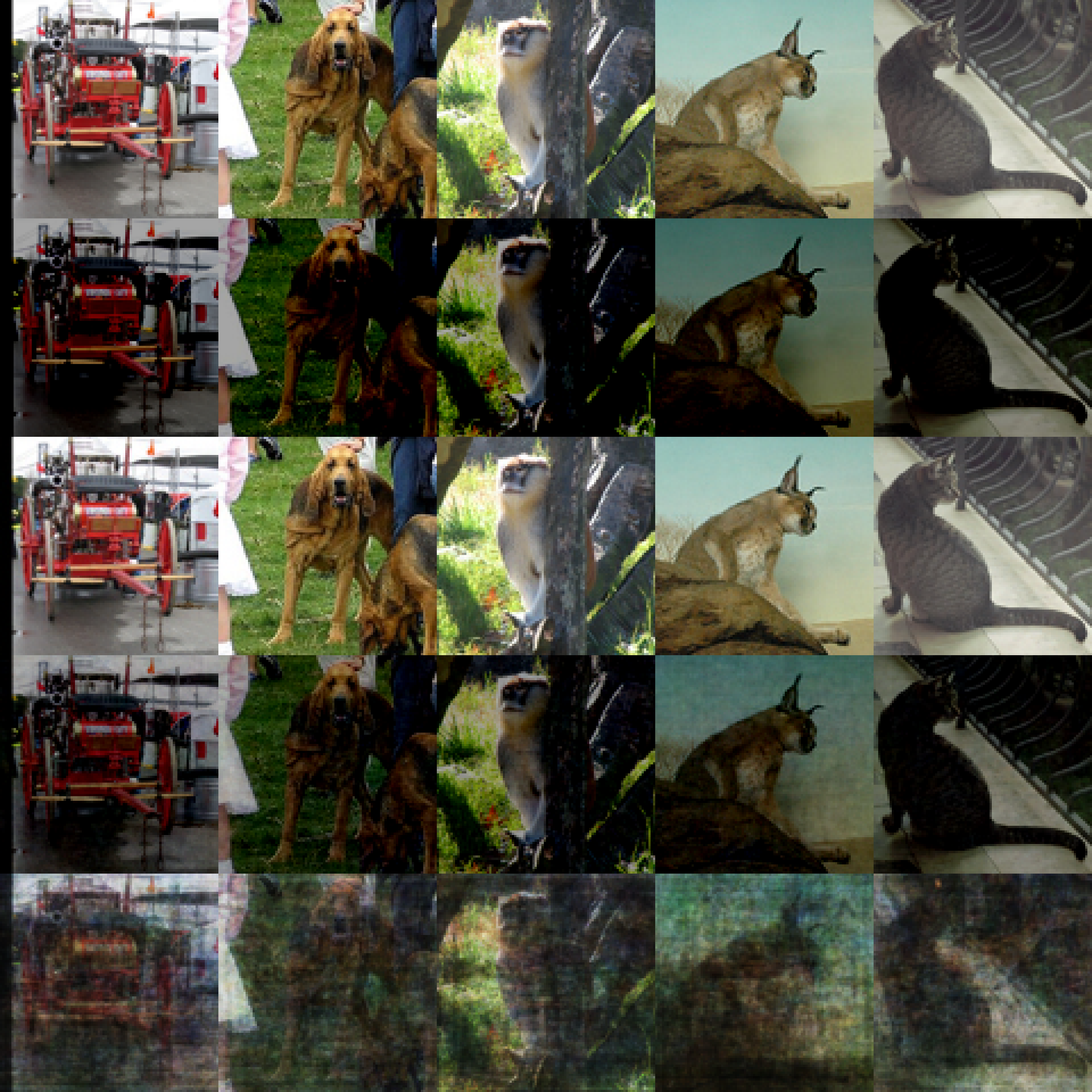}
		\caption{$\Delta = -0.7$}
		\label{fig:retrieval_dynamics_brightness_-0.7}
	\end{subfigure}
	\hfill
	\begin{subfigure}[t]{0.315\textwidth}
		\centering
		\includegraphics[width=\linewidth]{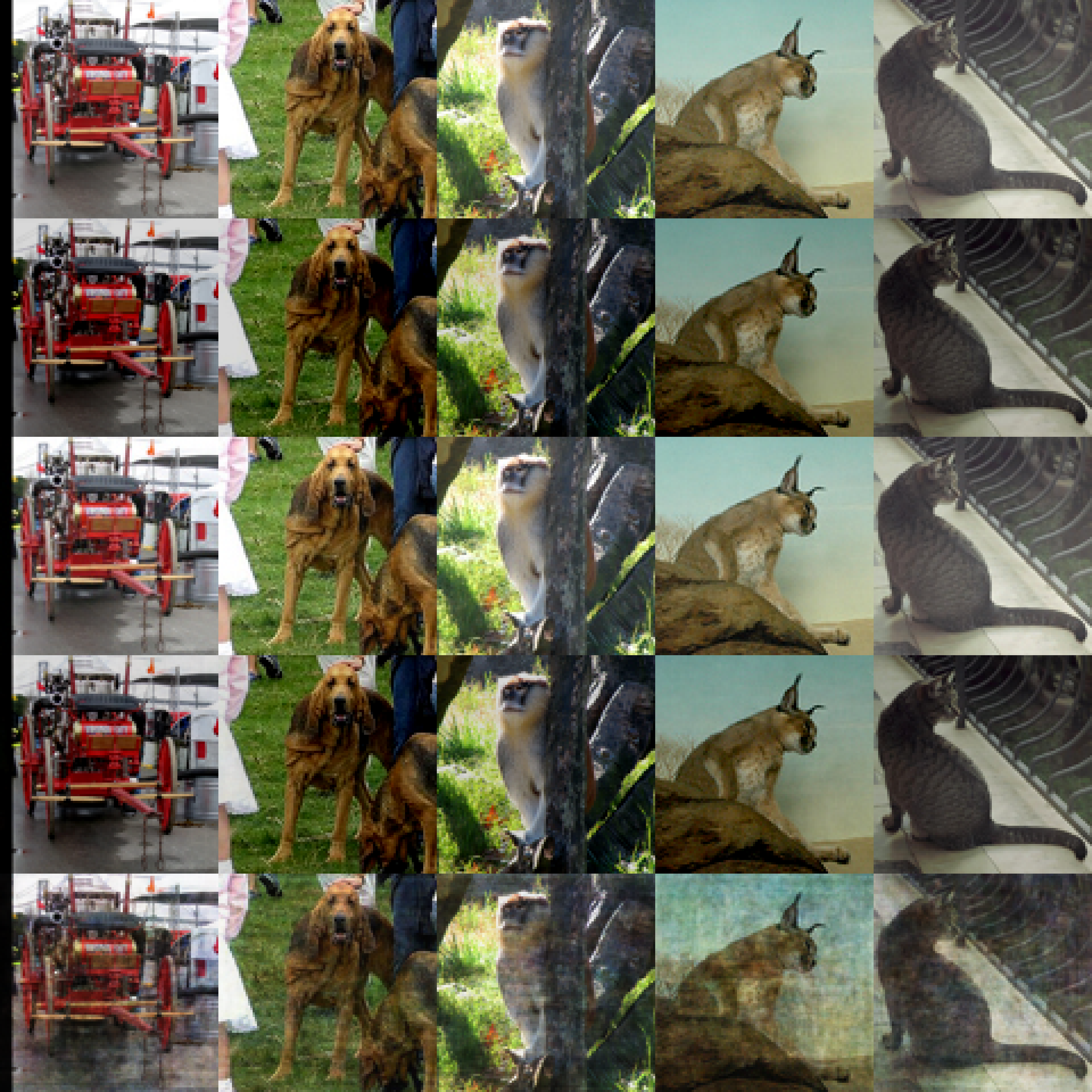}
		\caption{$\Delta = -0.3$}
		\label{fig:retrieval_dynamics_brightness_-0.3}
	\end{subfigure}
	\hfill
	\begin{subfigure}[t]{0.315\textwidth}
		\centering
		\includegraphics[width=\linewidth]{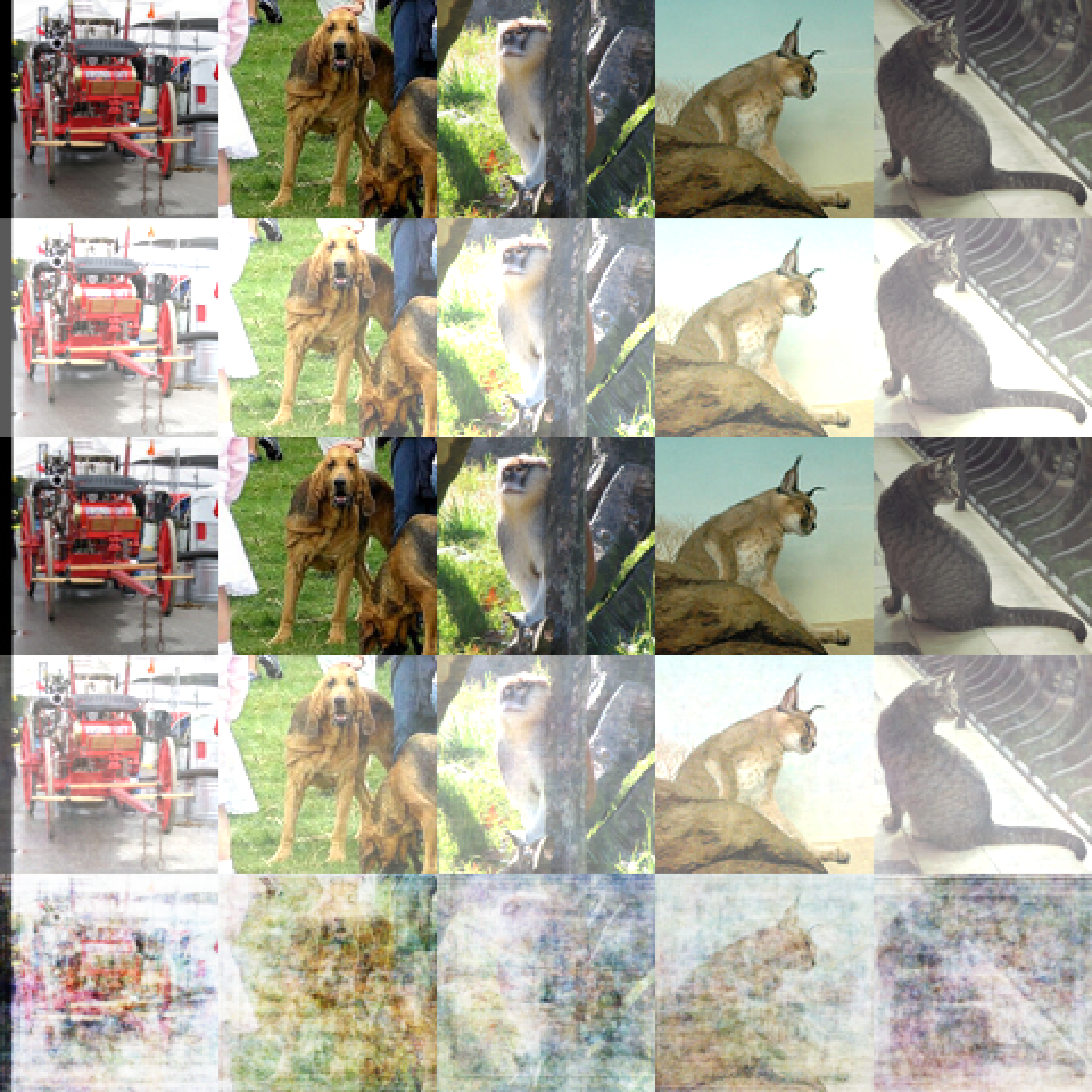}
		\caption{$\Delta = 0.7$}
		\label{fig:retrieval_dynamics_brightness_0.7}
	\end{subfigure}
	\vspace{0.1em}
	\begin{subfigure}[t]{0.315\textwidth}
		\centering
		\includegraphics[width=\linewidth]{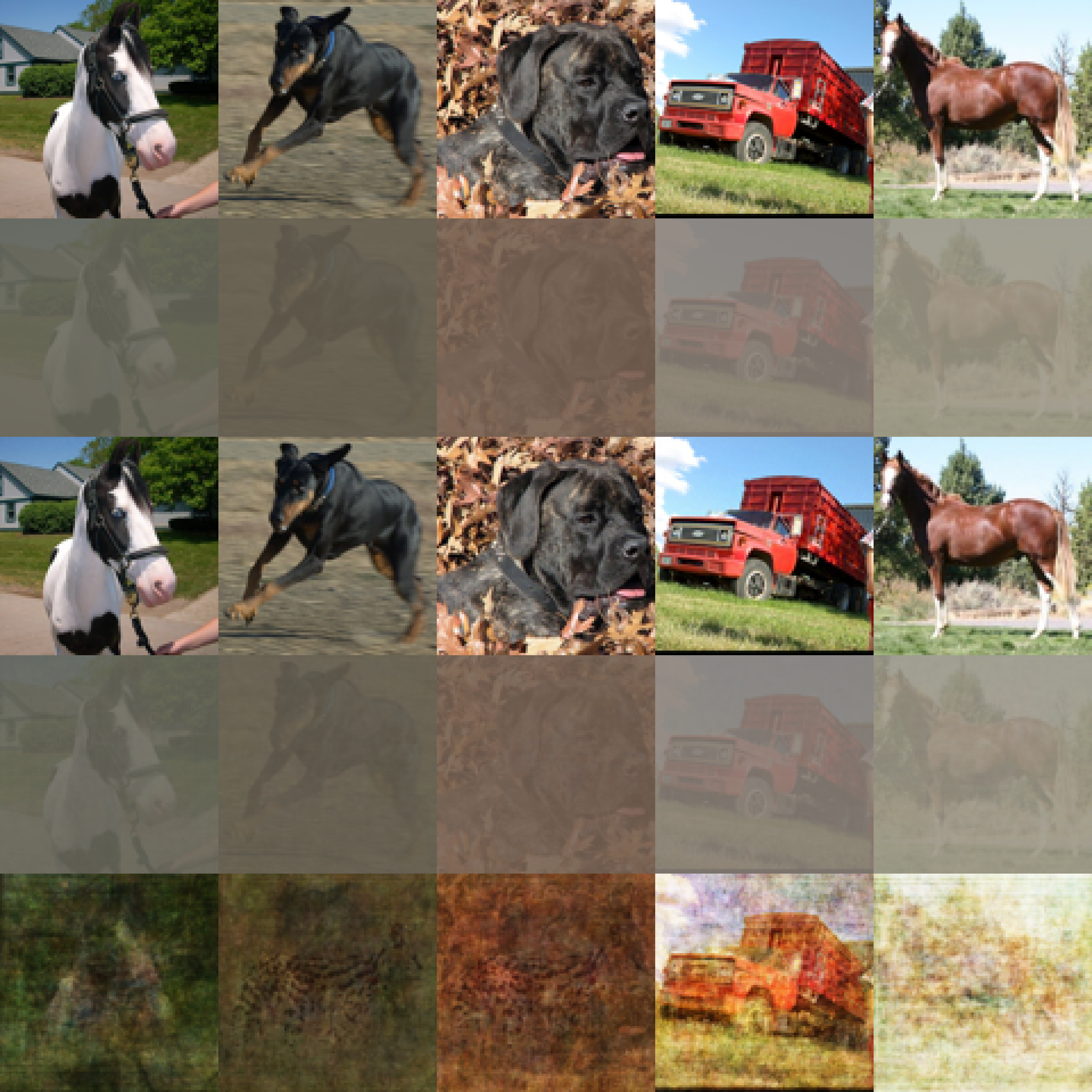}
		\caption{$\alpha = 0.1$}
		\label{fig:retrieval_dynamics_contrast_0.1}
	\end{subfigure}
	\hfill
	\begin{subfigure}[t]{0.315\textwidth}
		\centering
		\includegraphics[width=\linewidth]{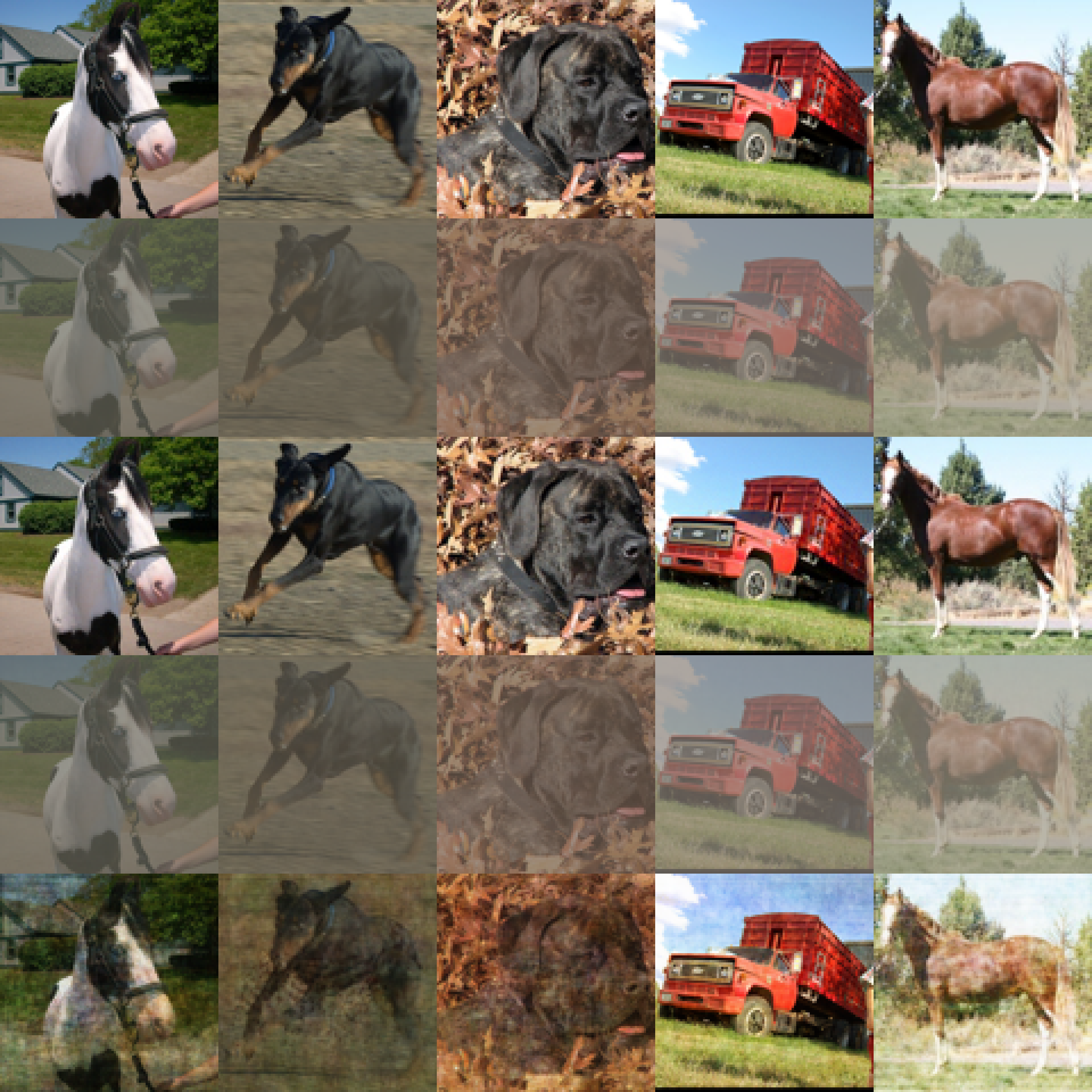}
		\caption{$\alpha = 0.3$}
		\label{fig:retrieval_dynamics_contrast_0.3}
	\end{subfigure}
	\hfill
	\begin{subfigure}[t]{0.315\textwidth}
		\centering
		\includegraphics[width=\linewidth]{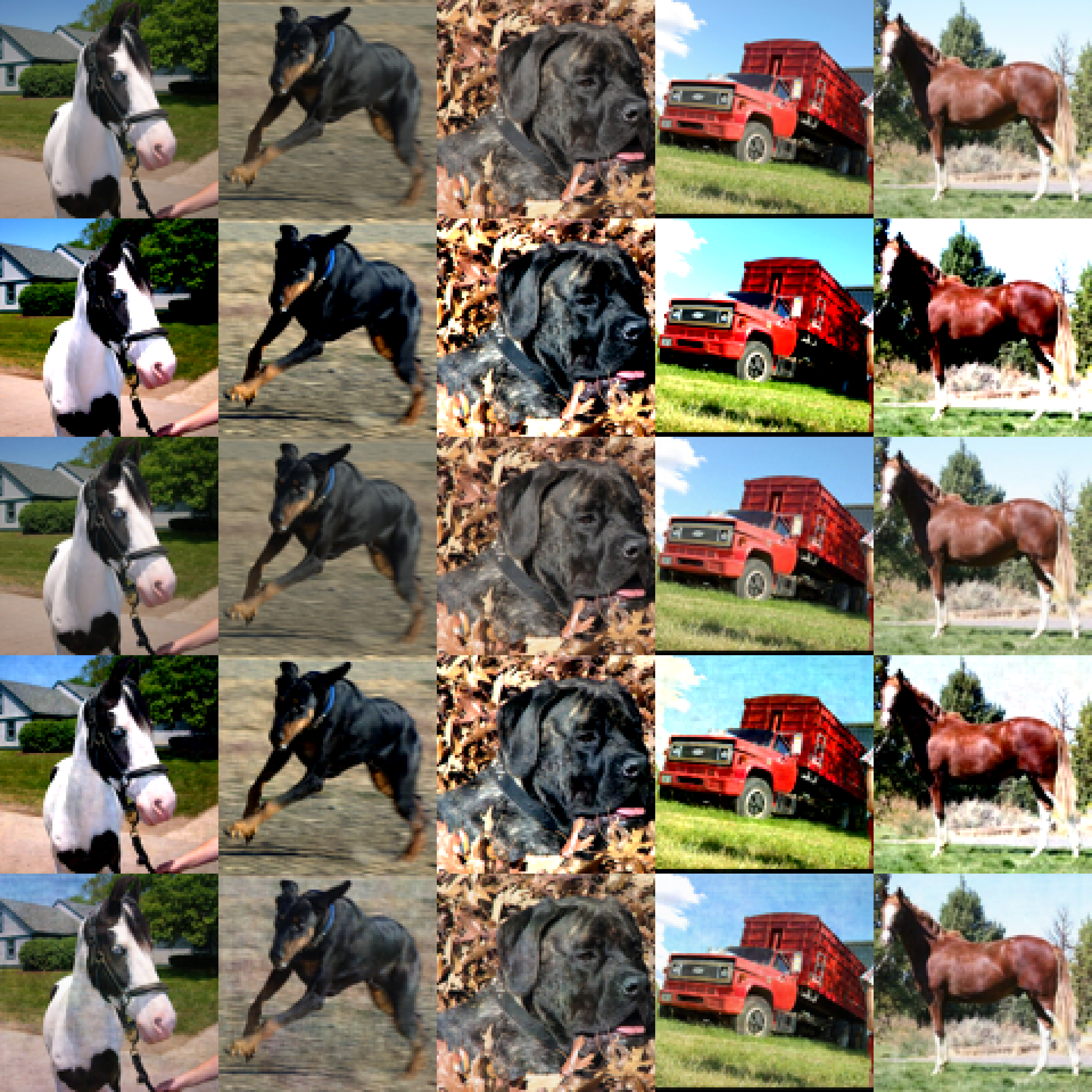}
		\caption{$\alpha = 1.9$}
		\label{fig:retrieval_dynamics_contrast_1.9}
	\end{subfigure}
	\caption{Qualitative retrieval comparison under brightness shift and contrast on the STL dataset. All models are trained to memorize 500 images resized to $96\times96$. Brightness shifts are applied in the normalized input space $[-1,1]$ by adding an offset $\Delta$ and clipping to the valid range. Contrast transformations are applied in the normalized input space $[-1,1]$ by scaling deviations $\alpha$ from the per-image mean. From top to bottom, the rows correspond to the original images, the brightness-shifted or contrast-transformed inputs, and the retrieval results obtained by  CRHNs, PCNs, and MHNs, respectively.}
	\label{fig:retrieval_dynamics_brightness_contrast}
\end{figure*}

We evaluate the auto-associative memory models under multiple input degradations to assess both reconstruction accuracy and robustness. The perturbations include photometric transformations, structural corruptions, and gradient-based adversarial variations.

Photometric perturbations are introduced to evaluate robustness to intensity variations. Specifically, brightness shifts are defined as
\begin{equation}
	x' = \mathrm{clip}(x + \Delta, -1, 1),
\end{equation}
and contrast shifts are defined as
\begin{equation}
	x' = \mathrm{clip}((x - \mu)\alpha + \mu, -1, 1),
\end{equation}
where $\mu$ denotes the per-image mean, $\Delta$ controls the brightness offset, and $\alpha$ adjusts the contrast. All perturbations are applied directly in the normalized space $[-1,1]$.

Structural corruptions are used to evaluate recovery from incomplete or noisy observations. These include partial masking and additive Gaussian noise, which test the model's ability to reconstruct missing information and suppress random perturbations.

In addition to these transformations, we evaluate robustness under gradient-based perturbations generated using a surrogate model. Specifically, we employ a convolutional autoencoder to produce adversarial inputs using methods such as the Fast Gradient Sign Method (FGSM) \cite{goodfellow2014explaining}, Fast FGSM (FFGSM) \cite{wong2020fast}, Basic Iterative Method (BIM) \cite{kurakin2018adversarial}, Projected Gradient Descent (PGD) \cite{mkadry2017towards}, Momentum Iterative FGSM (MI-FGSM) \cite{dong2018boosting}, Nesterov Iterative FGSM (NI-FGSM) \cite{lin2019nesterov}, Diverse Input FGSM (DI-FGSM) \cite{xie2019improving}, and Expectation Over Transformation PGD (EOTPGD) \cite{liu2018adv}.This approach enables consistent perturbation generation across different models, including RHNs, PCNs, and MHNs, whose recurrent and energy-based structures make direct gradient-based attacks less stable.

Performance is assessed quantitatively using the mean squared error (MSE) between the reconstructed output and the clean target image. All results are reported as mean $\pm$ standard deviation over five independent runs. Qualitatively, we visualize retrieval results under different perturbation settings to compare reconstruction fidelity and robustness across models.

\subsection{Retrieval Performance under Structured and Photometric Degradations}
The qualitative retrieval results under different input degradations are presented in Figures~\ref{fig:retrieval_dynamics_corrupt_noise} and \ref{fig:retrieval_dynamics_brightness_contrast}, including partial masking, additive noise, and brightness or contrast shifts. These results provide an intuitive comparison of the reconstruction behavior of CRHNs, PCNs, and MHNs under increasingly challenging conditions.

As shown in Figure~\ref{fig:retrieval_dynamics_corrupt_noise}, the models are evaluated under corrupted and noisy patterns. In Figure~\ref{fig:retrieval_dynamics_corrupt_10}, where only 10 columns are occluded, all models are able to retrieve the corrupted images; however, for PCNs and MHNs, a slight loss of accuracy is observed in the occluded regions. When 50 columns are occluded, as shown in Figure~\ref{fig:retrieval_dynamics_corrupt_50}, both PCNs and MHNs begin to struggle to reconstruct the images accurately. In contrast, CRHN is still able to recover the correct patterns, provided that the remaining visible regions are sufficiently distinctive. When the corruption becomes more severe, with 80 columns occluded (Figure~\ref{fig:retrieval_dynamics_corrupt_80}), PCNs and MHNs fail to retrieve the original images entirely. In comparison, CRHN is still capable of completing the images by leveraging the stored patterns in the network. This behavior is consistent with the mechanism of CRHNs, where stored patterns correspond to fixed points of the dynamical system.

As shown in Figure~\ref{fig:retrieval_dynamics_noise_0.3}, when Gaussian noise with $\mu = 0$ and $\sigma = 0.3$ is added to the samples, all models are able to retrieve the images accurately. However, as shown in Figure~\ref{fig:retrieval_dynamics_noise_1.1}, when the noise level increases to $\sigma = 1.1$, MHNs fail to reconstruct the images, and the outputs produced by PCNs become noticeably blurred. In contrast, CRHN is still able to retrieve the images with high fidelity. When the noise level further increases to $\sigma = 2.1$, as shown in Figure~\ref{fig:retrieval_dynamics_noise_2.1}, MHNs completely fail, and the outputs of PCNs become even more degraded compared to the case of $\sigma = 1.1$. In contrast, CRHNs can still accurately reconstruct all the images without any loss of accuracy.

As shown in Figure~\ref{fig:retrieval_dynamics_brightness_contrast}, the models are evaluated under photometric perturbations, including brightness and contrast shifts. Unlike structural corruptions, these perturbations preserve spatial structure while altering pixel intensity distributions, thereby testing the robustness of the models to global appearance variations.

As demonstrated in Figures~\ref{fig:retrieval_dynamics_brightness_-0.7}, \ref{fig:retrieval_dynamics_brightness_-0.3}, and \ref{fig:retrieval_dynamics_brightness_0.7}, negative brightness shifts ($\Delta = -0.7$ and $\Delta = -0.3$) make the input images significantly darker, while a positive brightness shift ($\Delta = 0.7$) makes them significantly brighter. Under these conditions, MHNs exhibit noticeable reconstruction errors when the images are too dark or too bright. PCNs are unable to recover the original brightness and tend to preserve the intensity of the input images; however, their final reconstructions lose fine details. In contrast, CRHN is able to retrieve the patterns with high fidelity.

For contrast shifts, as shown in Figures~\ref{fig:retrieval_dynamics_contrast_0.1}, \ref{fig:retrieval_dynamics_contrast_0.3}, and \ref{fig:retrieval_dynamics_contrast_1.9}, reducing the contrast degrades the retrieval performance of MHNs, while increasing the contrast has a relatively smaller effect on their reconstruction accuracy. Similarly, PCNs fail to recover the original contrast and tend to preserve the contrast of the input images. In contrast, CRHN is able to accurately reconstruct the images within a reasonable range of contrast variations.

\subsection{Robustness under Gradient-Based Perturbations}

\begin{table*}[htbp]
	\centering
	\setlength{\tabcolsep}{4pt}
	\renewcommand{\arraystretch}{0.9}
	
	\begin{threeparttable}
		\caption{Reconstruction error (MSE) of associative memory models under various adversarial attack methods on STL images. Results are reported for memory sizes of 50 and 100. Lower values indicate better robustness.}
		\label{tab:attack_eval_50_100}
		
		\begin{tabular}{lccc@{\hspace{1.2em}}ccc}
			\toprule
			& \multicolumn{3}{c}{Memory Size = 50} 
			& \multicolumn{3}{c}{Memory Size = 100} \\
			\cmidrule(lr){2-4} \cmidrule(lr){5-7}
			Attack & MHN & PCN & CRHN & MHN & PCN & CRHN \\
			\midrule
			
			FGSM & $0.1242\!\pm\!0.0359$ & $0.1170\!\pm\!0.0222$ & $\mathbf{0.0337\!\pm\!0.0303}$ 
			& $0.0962\!\pm\!0.0186$ & $0.1123\!\pm\!0.0110$ & $\mathbf{0.0050\!\pm\!0.0056}$ \\
			
			FFGSM & $0.0559\!\pm\!0.0124$ & $0.0786\!\pm\!0.0120$ & $\mathbf{0.0002\!\pm\!0.0001}$ 
			& $0.0619\!\pm\!0.0072$ & $0.0815\!\pm\!0.0105$ & $\mathbf{0.0005\!\pm\!0.0005}$ \\
			
			DIFGSM & $0.1633\!\pm\!0.0211$ & $0.1509\!\pm\!0.0139$ & $\mathbf{0.0454\!\pm\!0.0167}$ 
			& $0.1744\!\pm\!0.0083$ & $0.1582\!\pm\!0.0091$ & $\mathbf{0.0214\!\pm\!0.0273}$ \\
			
			MIFGSM & $0.1430\!\pm\!0.0357$ & $0.1326\!\pm\!0.0182$ & $\mathbf{0.0362\!\pm\!0.0398}$ 
			& $0.1337\!\pm\!0.0243$ & $0.1315\!\pm\!0.0156$ & $\mathbf{0.0085\!\pm\!0.0116}$ \\
			
			NIFGSM & $0.1449\!\pm\!0.0452$ & $0.1398\!\pm\!0.0182$ & $\mathbf{0.0416\!\pm\!0.0265}$ 
			& $0.1410\!\pm\!0.0267$ & $0.1363\!\pm\!0.0173$ & $\mathbf{0.0099\!\pm\!0.0121}$ \\
			
			BIM & $0.1431\!\pm\!0.0389$ & $0.1326\!\pm\!0.0185$ & $\mathbf{0.0382\!\pm\!0.0429}$ 
			& $0.1340\!\pm\!0.0230$ & $0.1320\!\pm\!0.0166$ & $\mathbf{0.0056\!\pm\!0.0065}$ \\
			
			EOTPGD & $0.1171\!\pm\!0.0200$ & $0.1266\!\pm\!0.0170$ & $\mathbf{0.0260\!\pm\!0.0230}$ 
			& $0.1298\!\pm\!0.0168$ & $0.1300\!\pm\!0.0165$ & $\mathbf{0.0068\!\pm\!0.0083}$ \\
			
			PGD & $0.1163\!\pm\!0.0224$ & $0.1250\!\pm\!0.0188$ & $\mathbf{0.0206\!\pm\!0.0211}$ 
			& $0.1254\!\pm\!0.0159$ & $0.1302\!\pm\!0.0160$ & $\mathbf{0.0064\!\pm\!0.0091}$ \\
			
			\bottomrule
		\end{tabular}
		
		\begin{tablenotes}
			\footnotesize
			\item[1] Adversarial samples are generated using a surrogate convolutional autoencoder trained on STL images. Perturbations are computed with $\epsilon=0.6$ and evaluated consistently across all models.
		\end{tablenotes}
		
	\end{threeparttable}
\end{table*}

\begin{figure*}[t]
	\centering
	\begin{subfigure}[t]{0.24\textwidth}
		\centering
		\includegraphics[width=\linewidth]{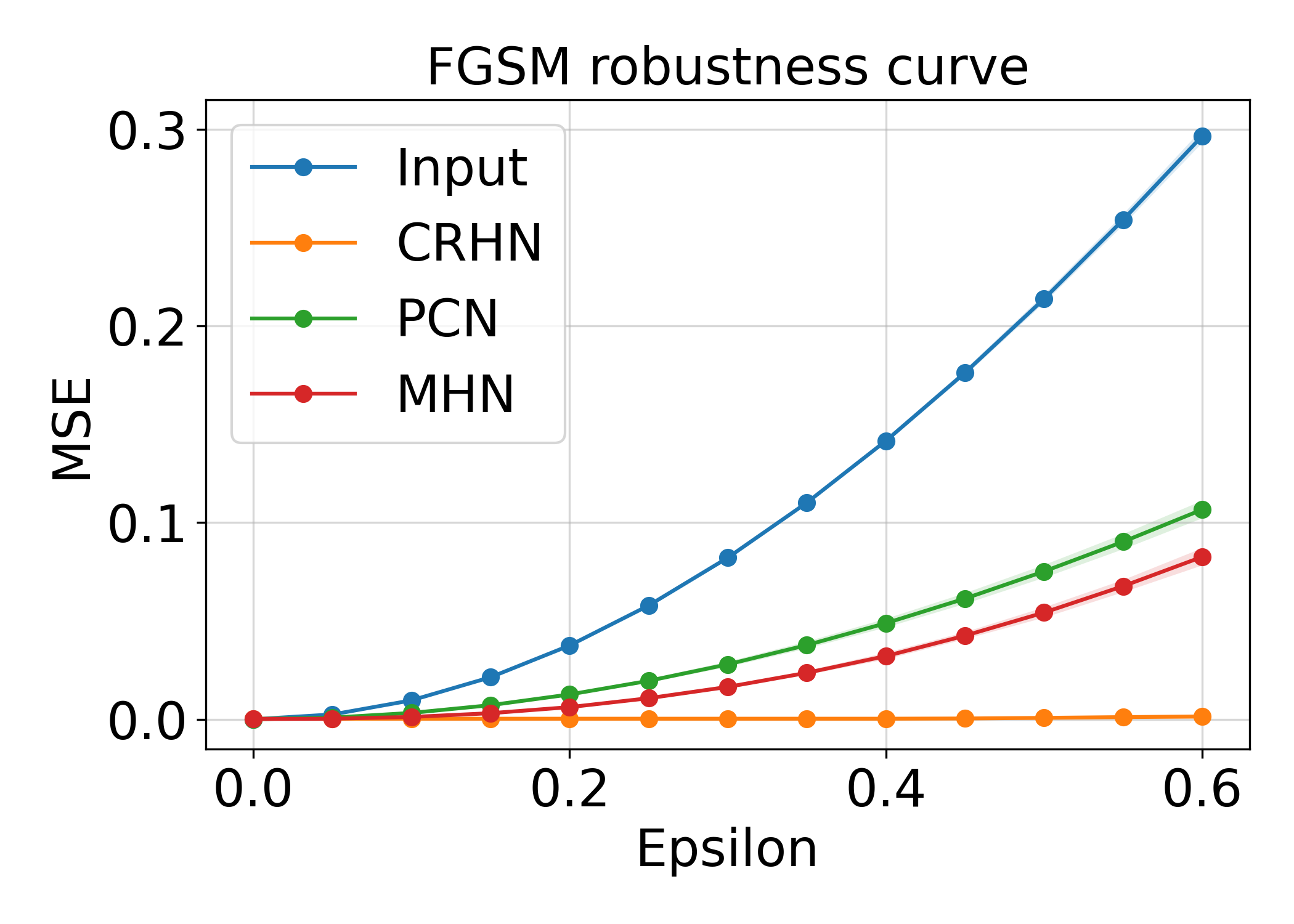}
	\end{subfigure}
	\hfill
	\begin{subfigure}[t]{0.24\textwidth}
		\centering
		\includegraphics[width=\linewidth]{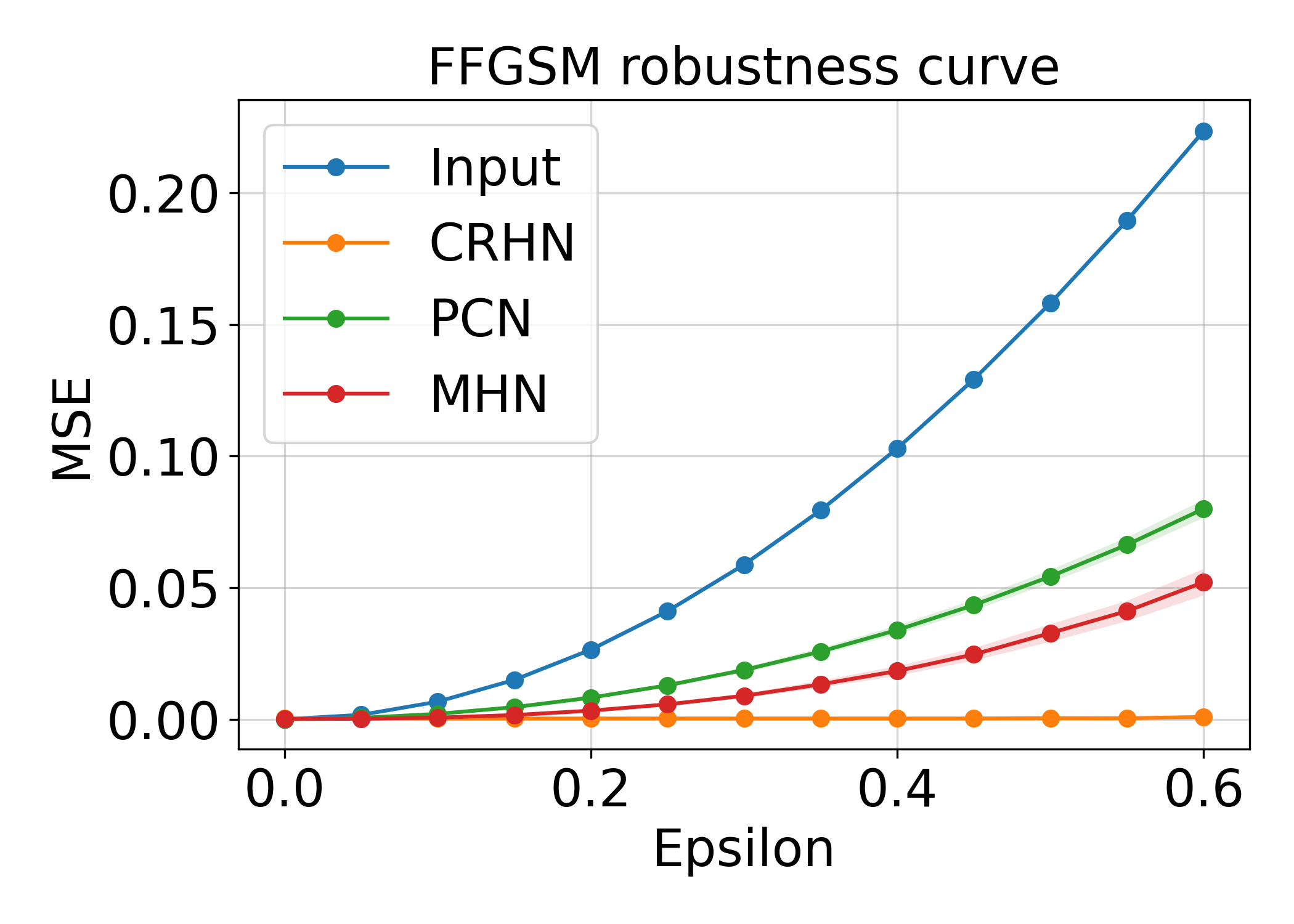}
	\end{subfigure}
	\hfill
	\begin{subfigure}[t]{0.24\textwidth}
		\centering
		\includegraphics[width=\linewidth]{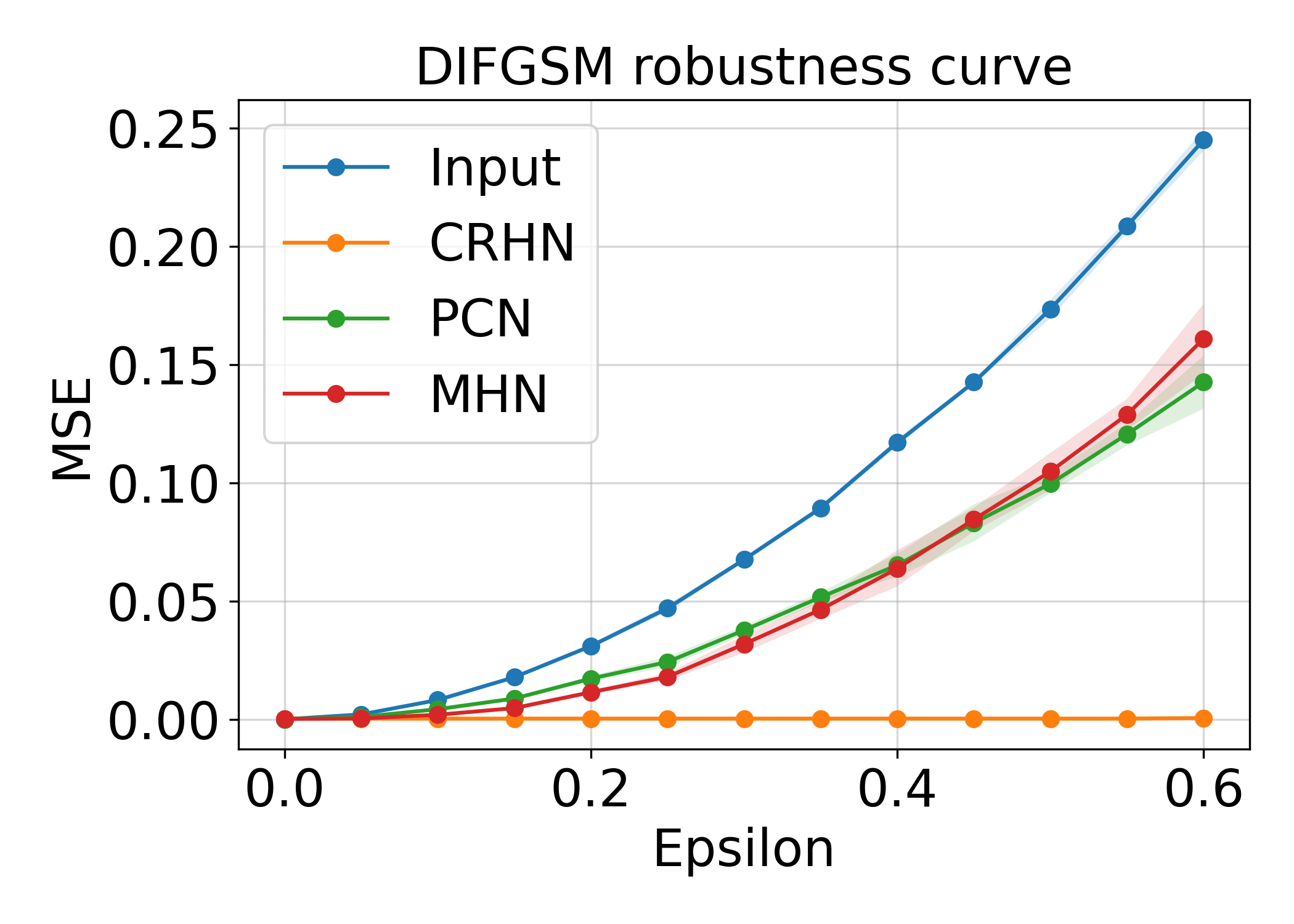}
	\end{subfigure}
	\hfill
	\begin{subfigure}[t]{0.24\textwidth}
		\centering
		\includegraphics[width=\linewidth]{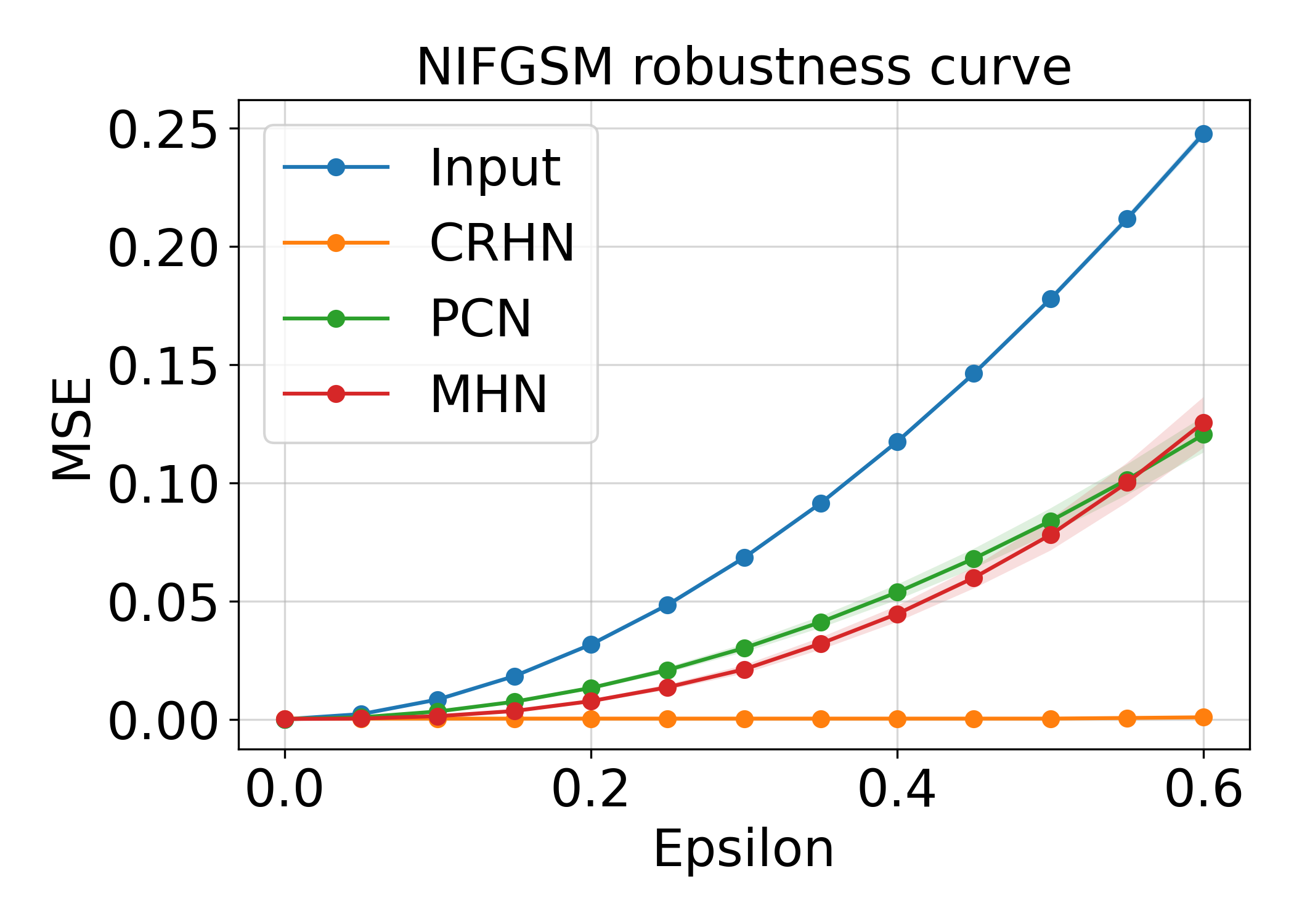}
	\end{subfigure}
	\hfill
	\begin{subfigure}[t]{0.24\textwidth}
		\centering
		\includegraphics[width=\linewidth]{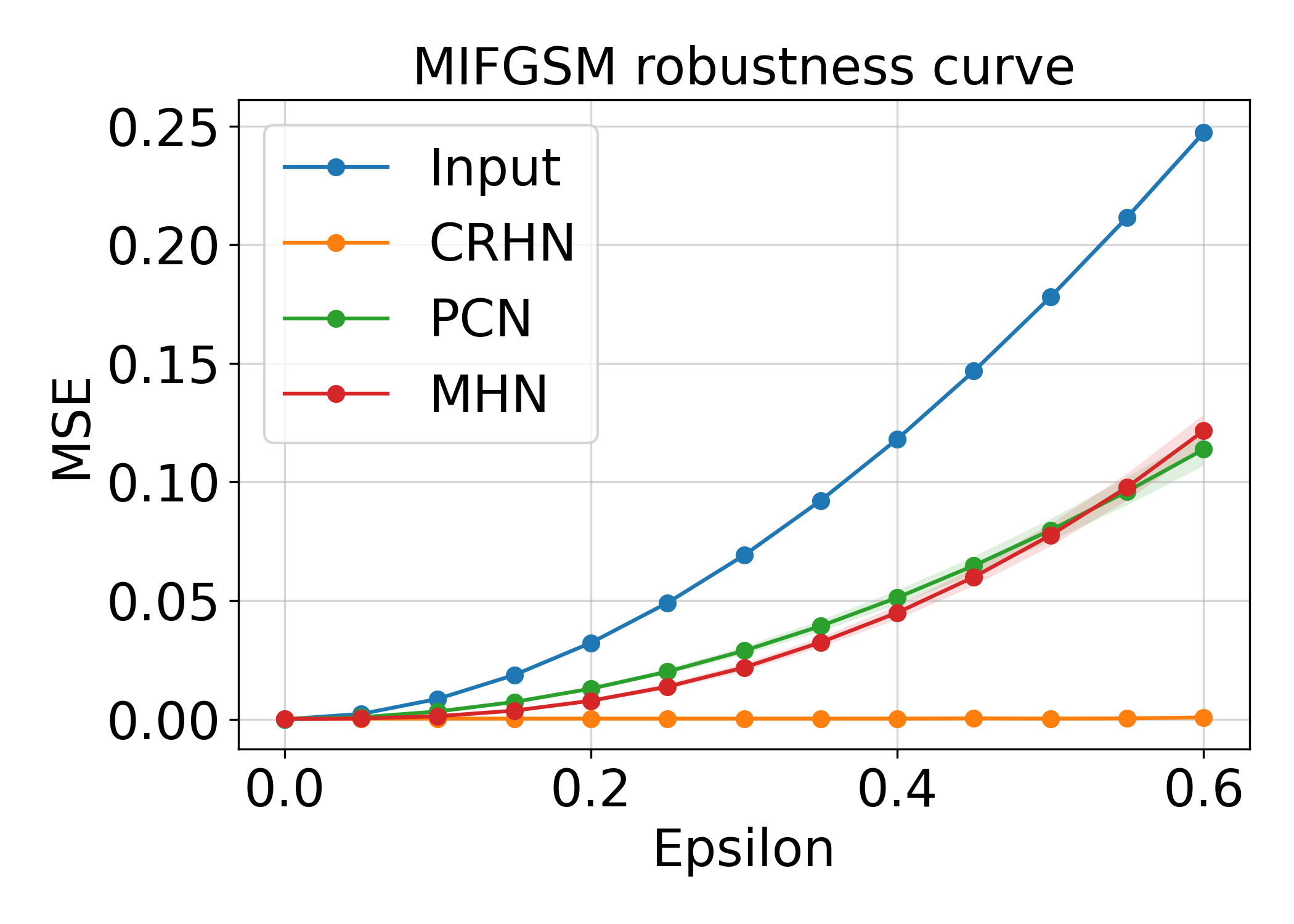}
	\end{subfigure}
	\hfill
	\begin{subfigure}[t]{0.23\textwidth}
		\centering
		\includegraphics[width=\linewidth]{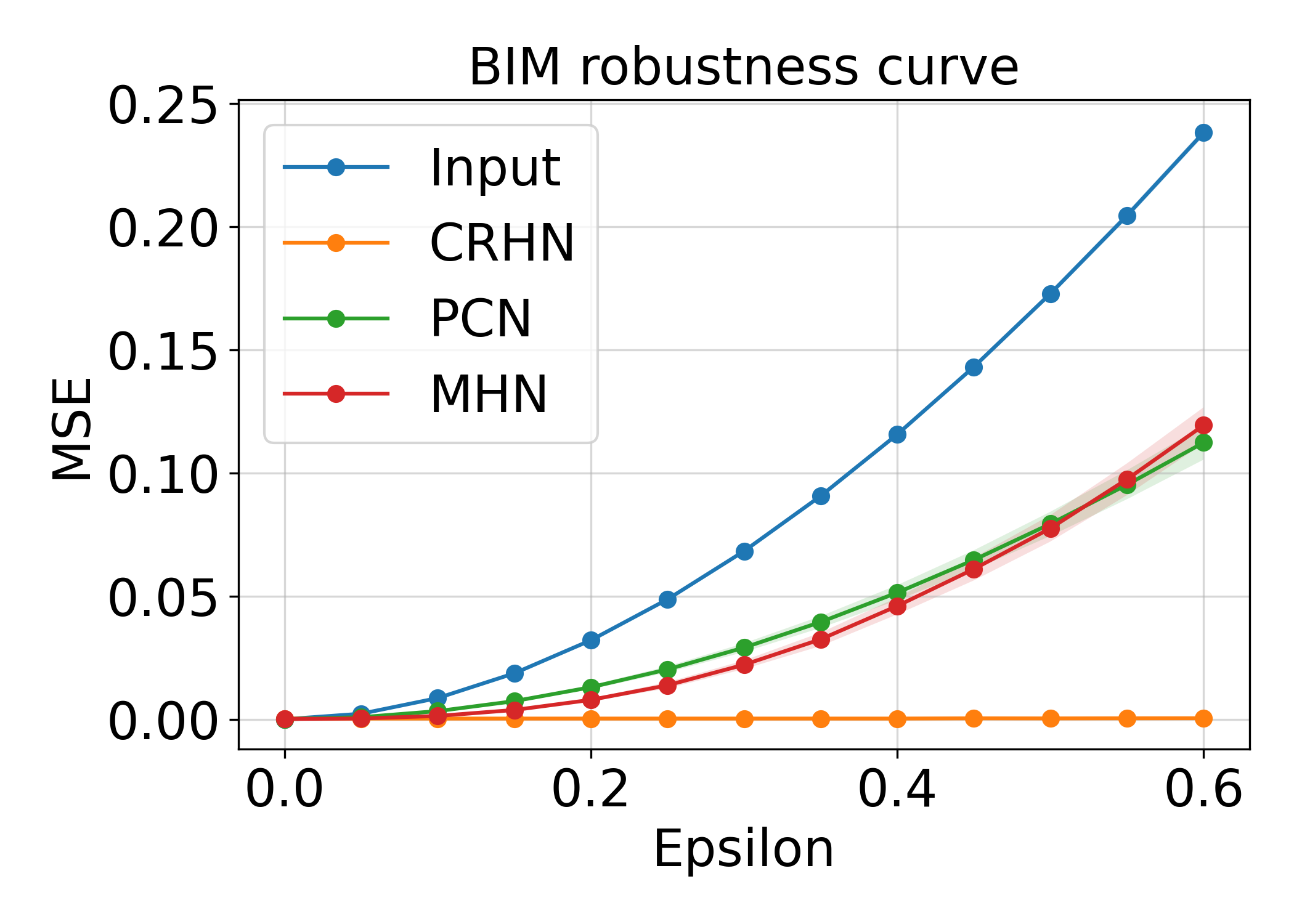}
	\end{subfigure}
	\hfill
	\begin{subfigure}[t]{0.24\textwidth}
		\centering
		\includegraphics[width=\linewidth]{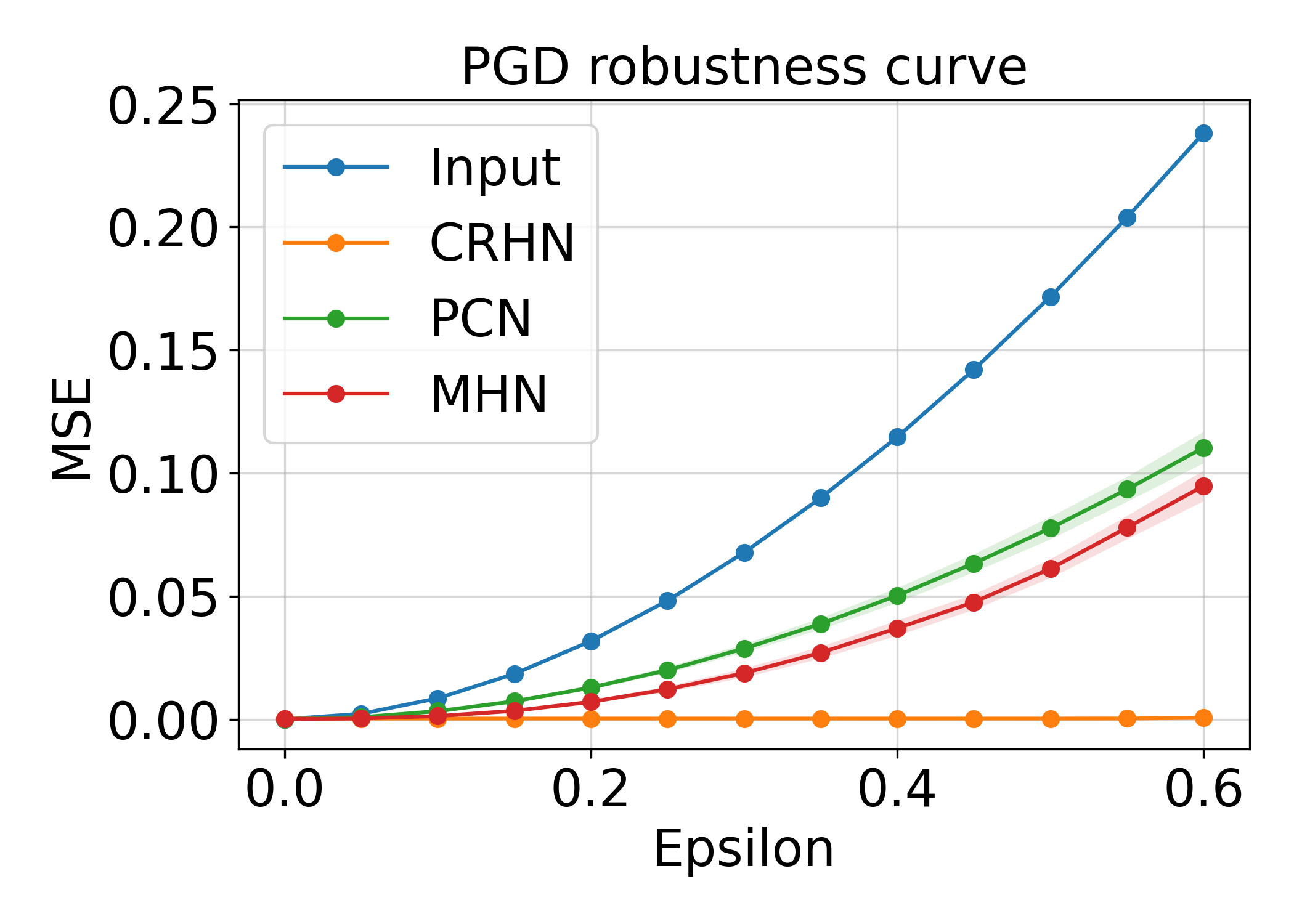}
	\end{subfigure}
	\hfill
	\begin{subfigure}[t]{0.24\textwidth}
		\centering
		\includegraphics[width=\linewidth]{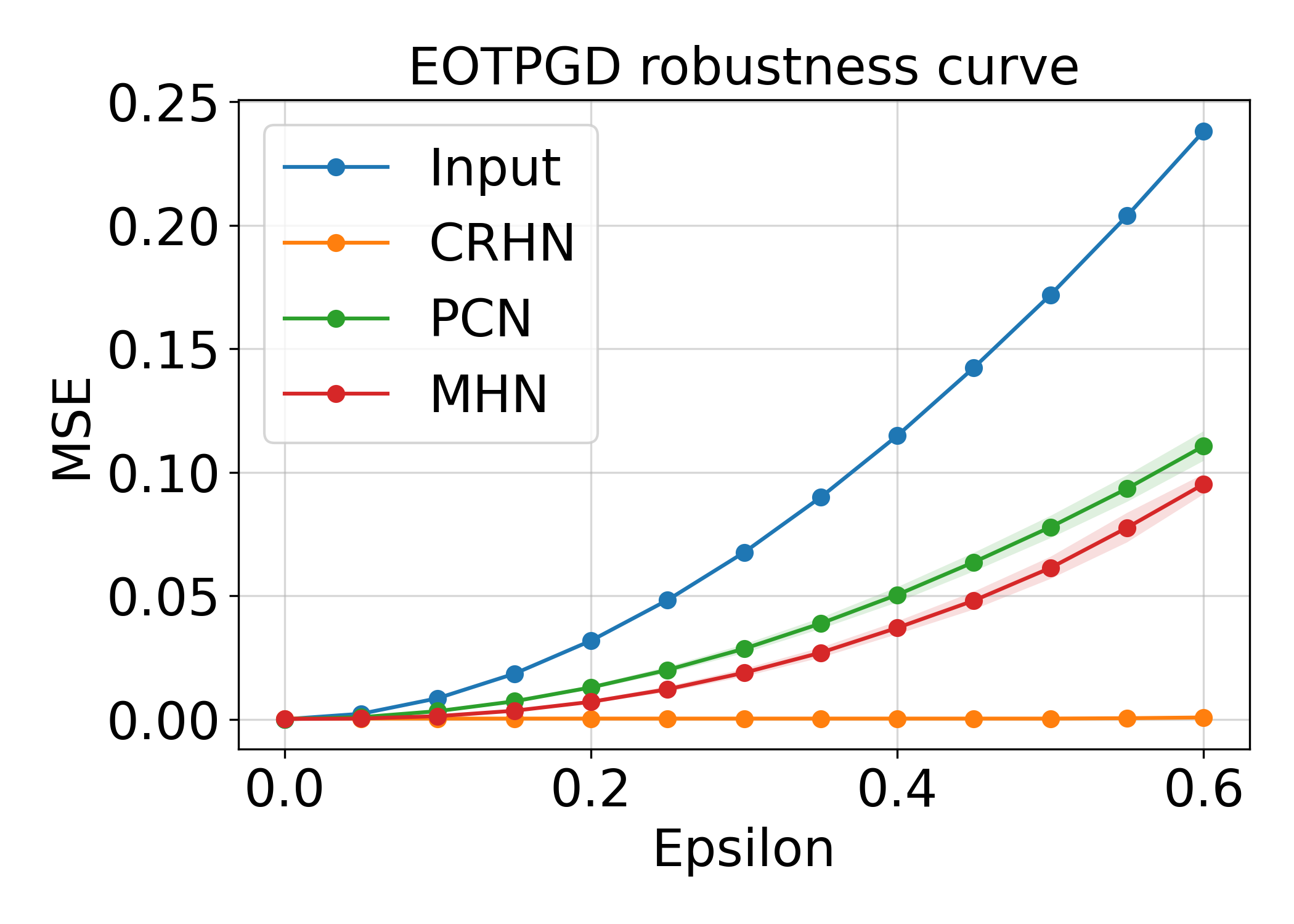}
	\end{subfigure}
	\caption{Quantitative robustness evaluation under increasing adversarial perturbation strength. Each subplot shows the reconstruction error (MSE) as a function of perturbation magnitude $\epsilon$ for a specific attack method, including FGSM, FFGSM, MIFGSM, NIFGSM, BIM, PGD, EOTPGD, and DIFGSM. All models are trained to memorize 250 STL images resized to $96\times96$. The curves correspond to the input perturbation and the reconstruction results produced by MHNs, PCNs, and CRHNs, respectively. As $\epsilon$ increases, CRHNs consistently maintain lower reconstruction error compared to MHNs and PCNs, demonstrating stronger robustness to adversarial perturbations.}
	\label{fig:retrieval_dynamics_eps_250}
\end{figure*}

Table~\ref{tab:attack_eval_50_100} presents a quantitative comparison of the reconstruction error (MSE) of CRHNs, PCNs, and MHNs under various adversarial attack methods for two experimental settings, with memory sizes of 50 and 100. Across most attack methods, MHNs and PCNs exhibit similar performance trends, with PCNs generally achieving slightly lower errors than MHN. The performance gap between these two models is relatively small, suggesting that both are similarly sensitive to adversarial perturbations. In contrast, CRHNs consistently achieves significantly lower reconstruction errors, often by an order of magnitude, indicating substantially stronger robustness. To further validate the observed performance gains, we conduct Welch’s t-tests comparing CRHNs with MHNs and PCNs across all attack methods. The results show that CRHNs achieves significantly lower reconstruction error in all cases ($p < 0.01$), confirming that the improvements are statistically significant rather than due to random variation.

\subsection{Robustness Evaluation under Varying Perturbation Strength}

\begin{table*}[htbp]
	\centering
	\begin{threeparttable}
		\caption{Ablation study comparing full CRHNs and CRHNs without RHNs under Gaussian noise ($\sigma=0.7$) and partial masking (50 columns occluded). Lower MSE indicates better reconstruction performance.}
		\label{tab:ablation_combined}
		\setlength{\tabcolsep}{5pt}
		\begin{tabular}{lcc@{\hspace{1.2em}}cc}
			\toprule
			& \multicolumn{2}{c}{Gaussian Noise} & \multicolumn{2}{c}{Masking} \\
			\cmidrule(lr){2-3} \cmidrule(lr){4-5}
			Memory Size & CRHNs & CRHNs w/o RHNs & CRHNs & CRHNs w/o RHNs \\
			\midrule
			50  & $0.0002 \pm 0.0000$ & $0.0002 \pm 0.0001$ & $0.0002 \pm 0.0000$ & $0.0101 \pm 0.0093$ \\
			100 & $0.0002 \pm 0.0000$ & $0.0004 \pm 0.0002$ & $0.0002 \pm 0.0000$ & $0.0038 \pm 0.0040$ \\
			250 & $0.0244 \pm 0.0136$ & $0.0607 \pm 0.0338$ & $0.0003 \pm 0.0000$ & $0.0006 \pm 0.0002$ \\
			500 & $0.1133 \pm 0.0459$ & $0.3057 \pm 0.0920$ & $0.0184 \pm 0.0352$ & $0.1000 \pm 0.0832$ \\
			\bottomrule
		\end{tabular}
		\begin{tablenotes}
			\footnotesize
			\item[1] Both models share the same convolutional encoder–decoder. The ablated model removes the RHN module, isolating the contribution of attractor-based retrieval dynamics.
		\end{tablenotes}
	\end{threeparttable}
\end{table*}

\begin{figure*}[htbp]
	\centering
	\begin{subfigure}[t]{0.315\textwidth}
		\centering
		\includegraphics[width=\linewidth]{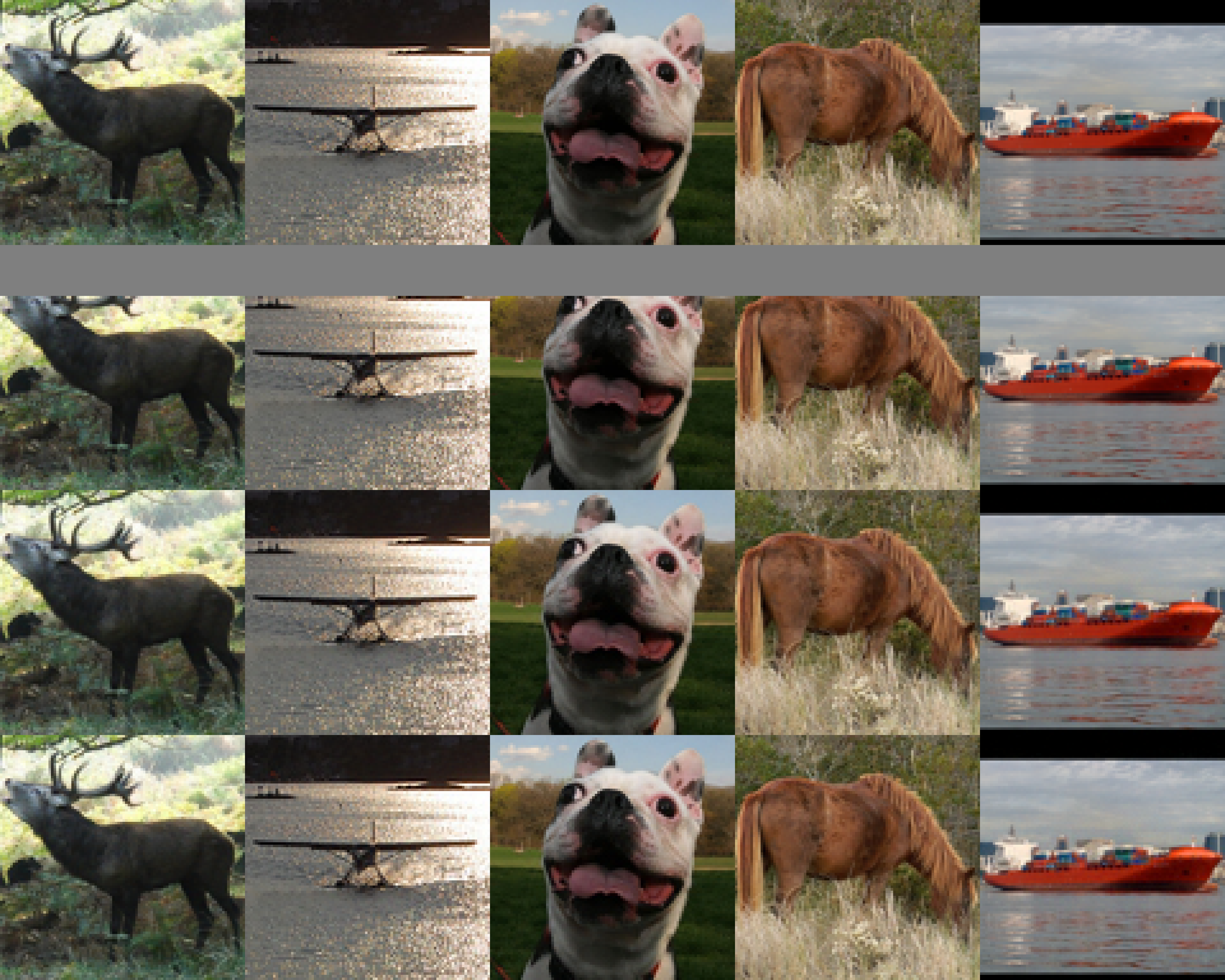}
		\caption{20 Rows are Corrupted}
		\label{fig:retrieval_dynamics_mask_ablation_20}
	\end{subfigure}
	\hfill
	\begin{subfigure}[t]{0.315\textwidth}
		\centering
		\includegraphics[width=\linewidth]{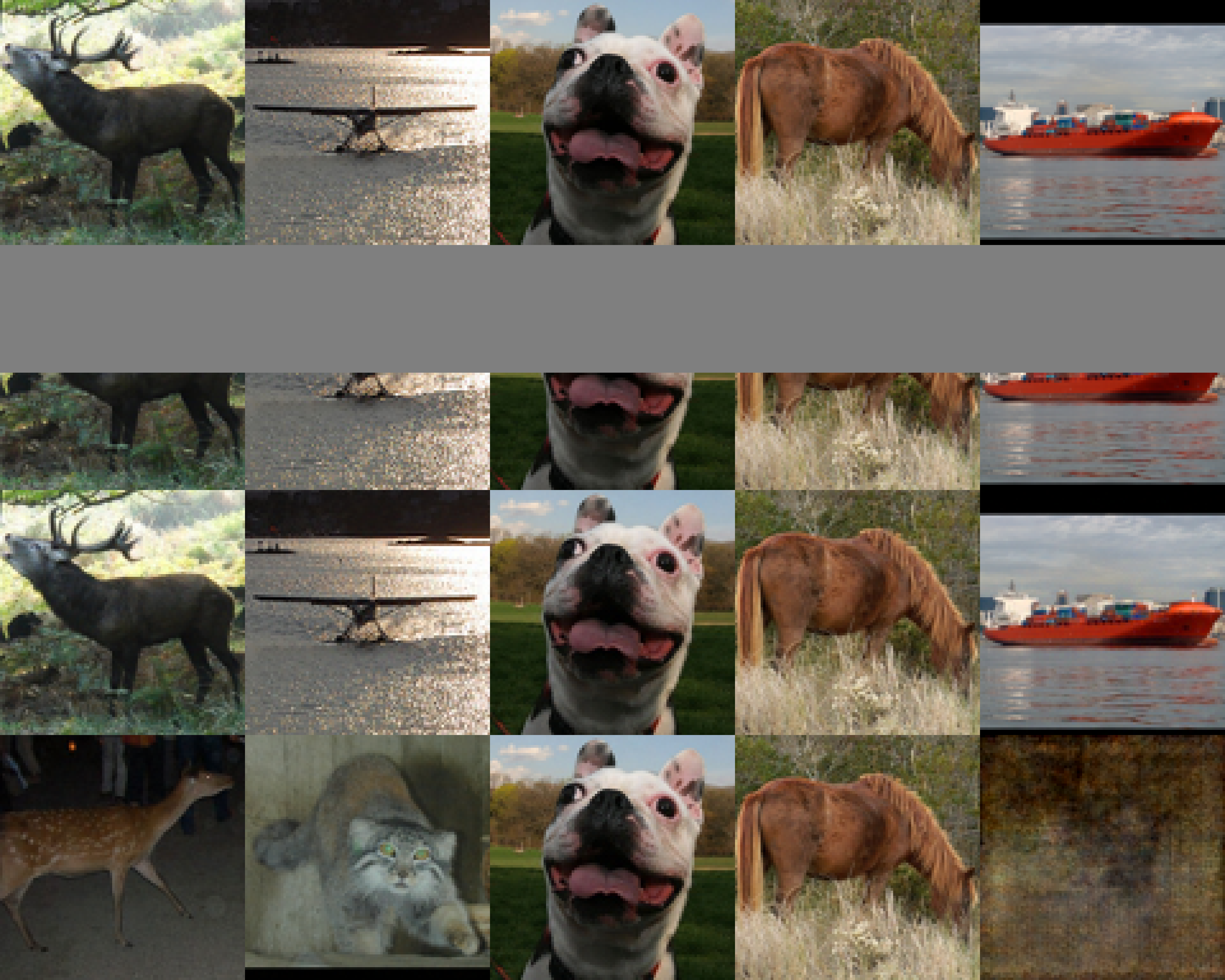}
		\caption{50 Rows are Corrupted}
		\label{fig:retrieval_dynamics_mask_ablation_50}
	\end{subfigure}
	\hfill
	\begin{subfigure}[t]{0.315\textwidth}
		\centering
		\includegraphics[width=\linewidth]{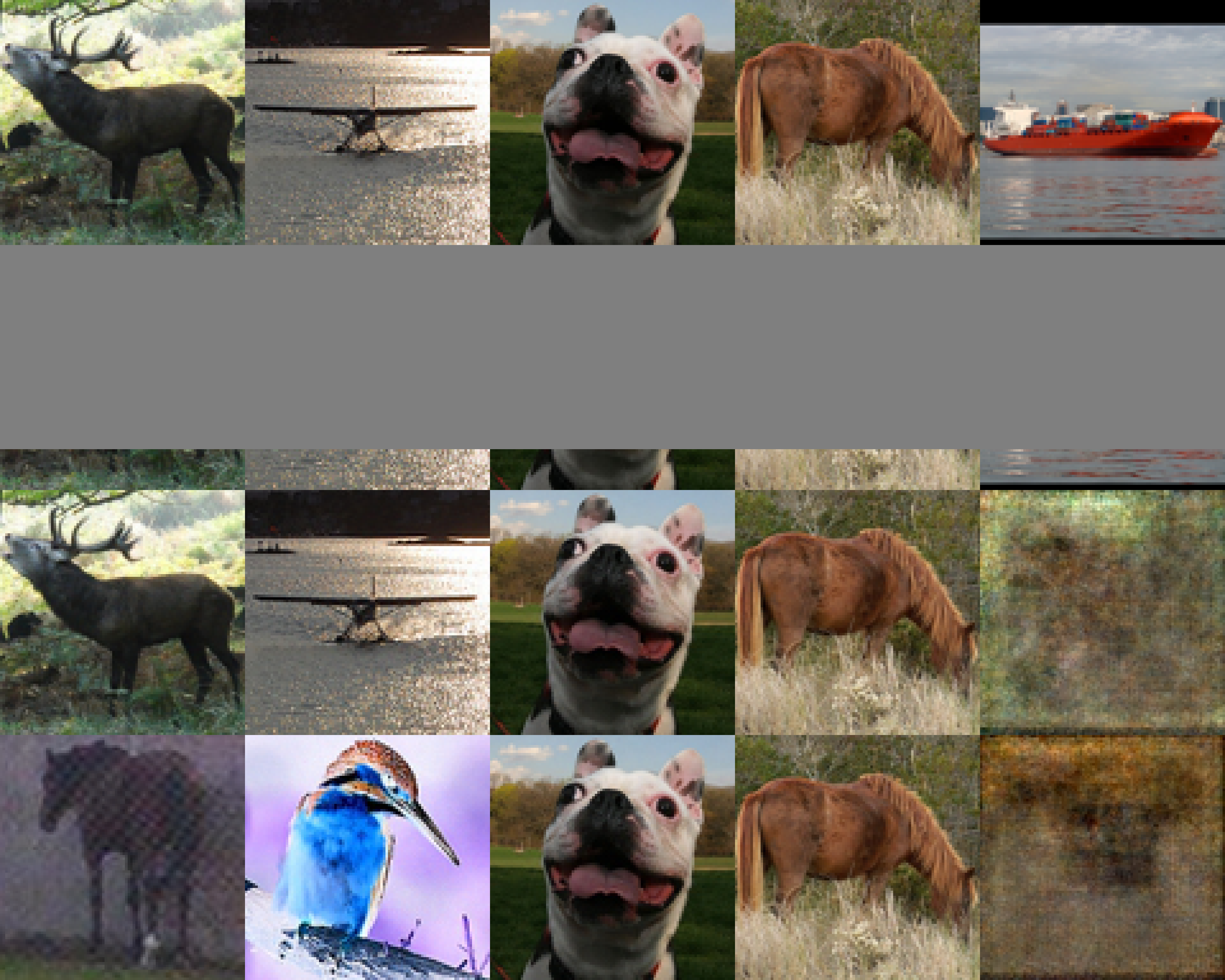}
		\caption{80 Rows are Corrupted}
		\label{fig:retrieval_dynamics_mask_ablation_80}
	\end{subfigure}
	\vspace{0.1em}
	\begin{subfigure}[t]{0.315\textwidth}
		\centering
		\includegraphics[width=\linewidth]{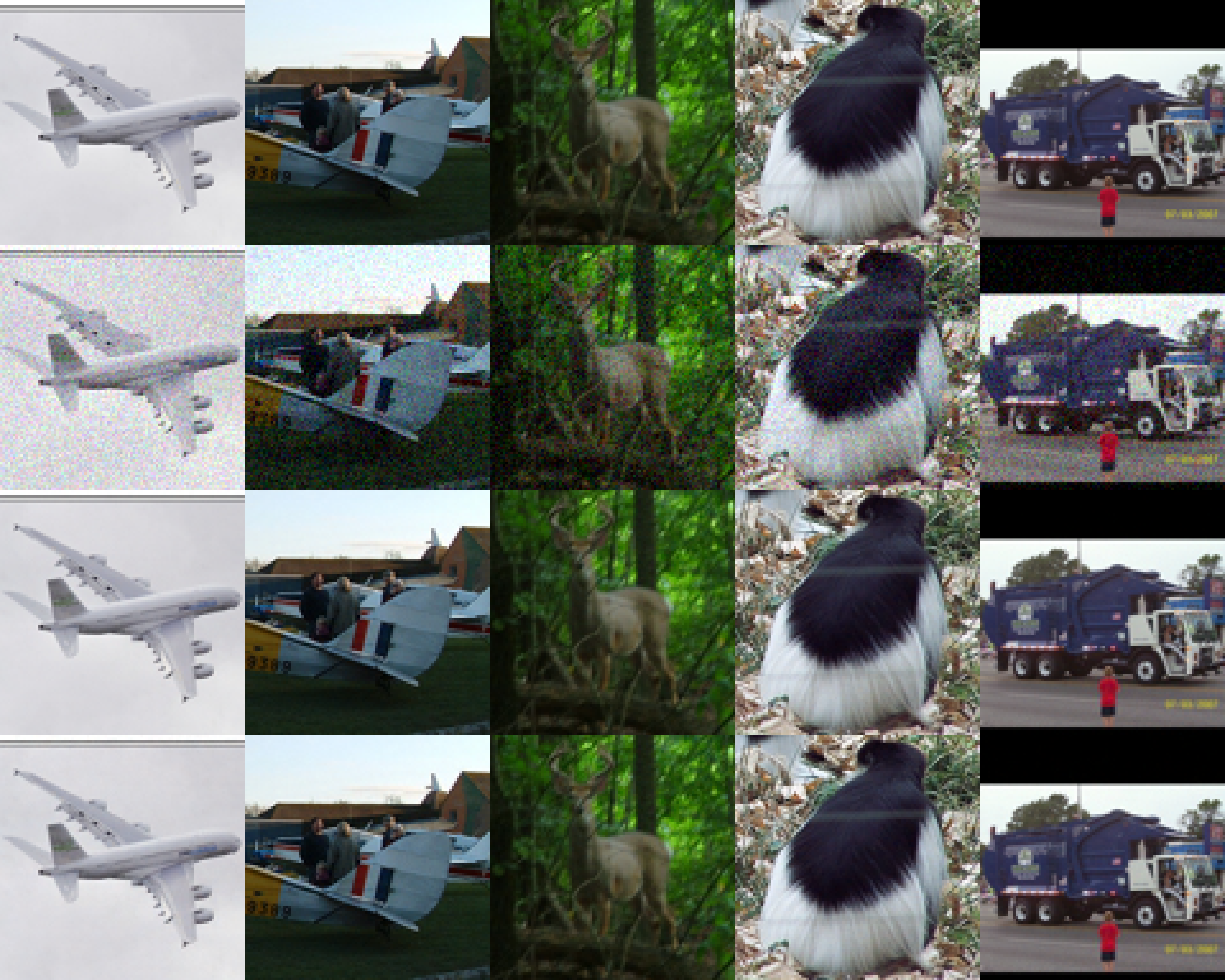}
		\caption{Gaussian Noise $\sigma=0.1$}
		\label{fig:retrieval_dynamics_albation_noise_0.1}
	\end{subfigure}
	\hfill
	\begin{subfigure}[t]{0.315\textwidth}
		\centering
		\includegraphics[width=\linewidth]{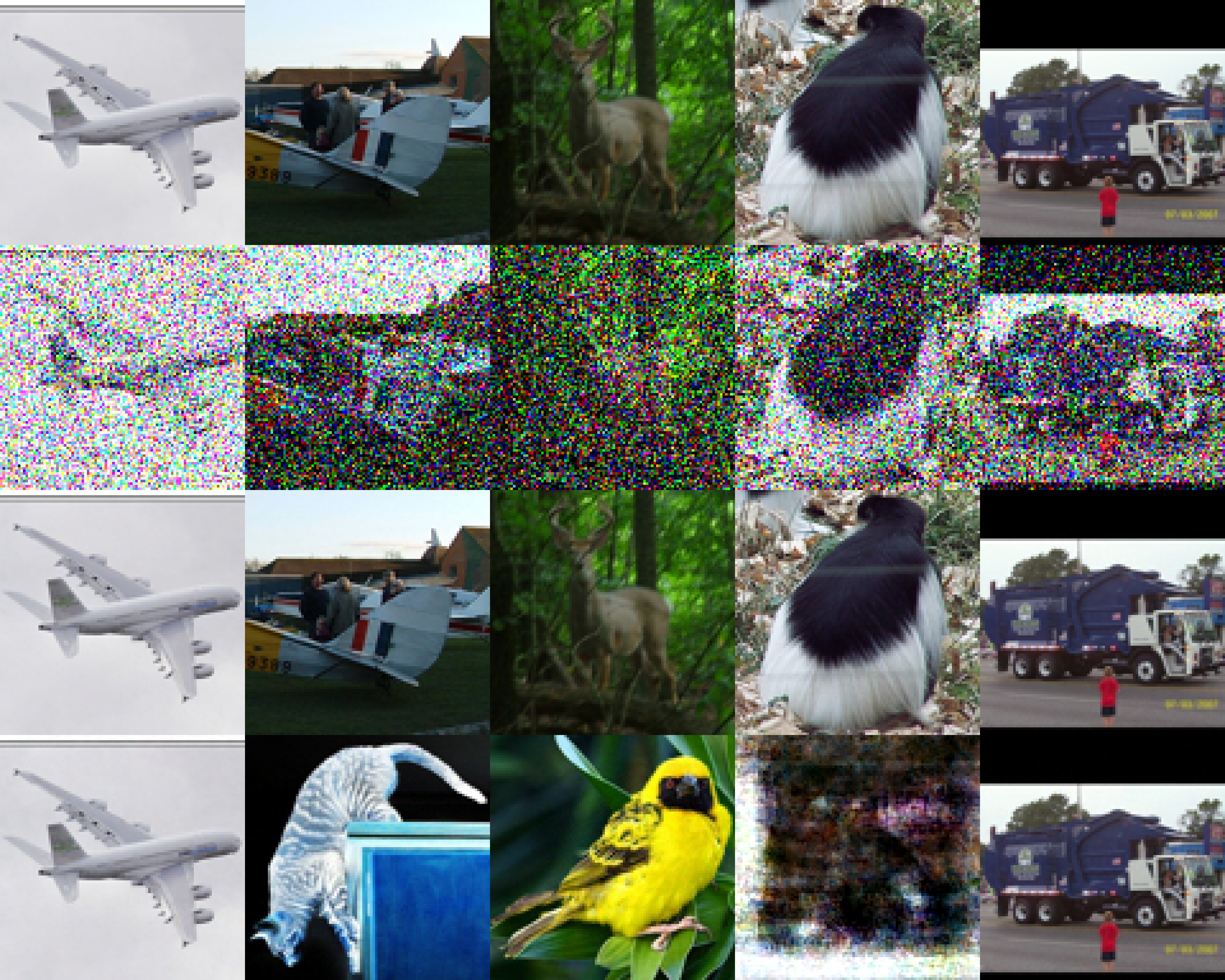}
		\caption{Gaussian Noise $\sigma=0.7$}
		\label{fig:retrieval_dynamics_ablation_noise_0.7}
	\end{subfigure}
	\hfill
	\begin{subfigure}[t]{0.315\textwidth}
		\centering
		\includegraphics[width=\linewidth]{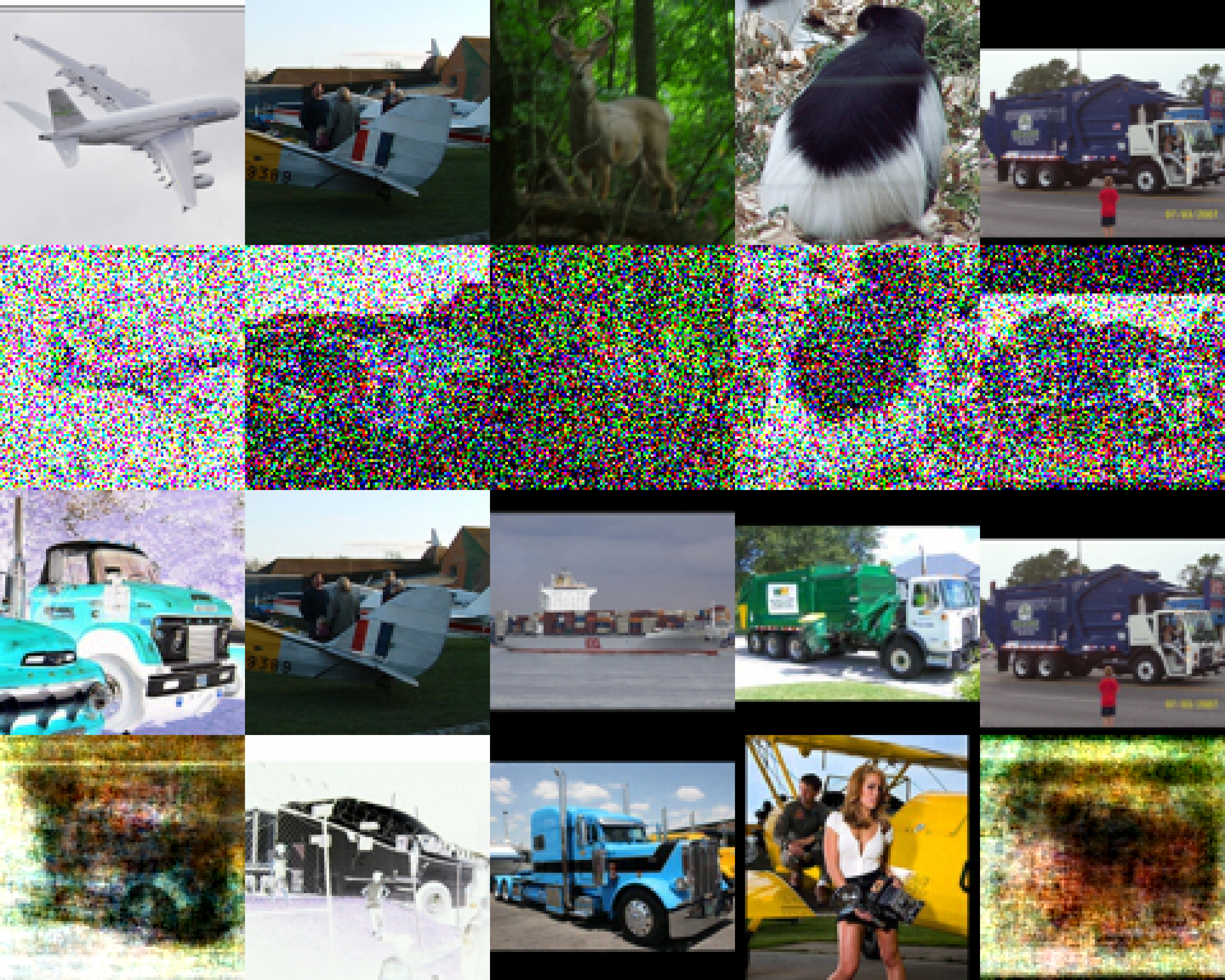}
		\caption{Gaussian Noise $\sigma=1.1$}
		\label{fig:retrieval_dynamics_ablation_noise_1.1}
	\end{subfigure}
	\caption{Qualitative retrieval comparison on the STL dataset, where all models are trained to memorize 500 images resized to $96\times96$. From top to bottom, the rows correspond to the original images, the corrupted and noisy inputs, and the retrieval results obtained by CRHNs, CRHNs without RHNs modules, respectively.}
	\label{fig:retrieval_dynamics_ablation_corrupt_noise}
\end{figure*}

As shown in Figure~\ref{fig:retrieval_dynamics_eps_250}, the robustness of the models is quantitatively evaluated under increasing adversarial perturbation strength across multiple attack methods, including FGSM, FFGSM, MIFGSM, NIFGSM, DIFGSM, BIM, PGD, and EOTPGD. Each subplot reports the reconstruction error (MSE) as a function of the perturbation magnitude $\epsilon$, allowing a direct comparison of how retrieval performance degrades under progressively stronger attacks.

A consistent trend across all attack methods is that the reconstruction error of the perturbed input increases steadily with $\epsilon$, while the reconstruction errors of all three models remain lower than that of the corresponding input. This indicates that all models exhibit a certain degree of noise-filtering capability, as the retrieval process is able to partially suppress adversarial perturbations rather than simply reproducing the corrupted input.

Among the three models, CRHNs demonstrate the strongest robustness. In particular, its reconstruction error remains nearly unchanged over a broad range of perturbation strengths. This behavior suggests that CRHNs are able to preserve stable attractor-based retrieval under moderate adversarial perturbations and only degrades when the perturbation becomes sufficiently strong to push the input outside the effective basin of attraction.

In contrast, MHNs and PCNs exhibit broadly similar sensitivity to adversarial perturbations. For attackers such as FGSM, FFGSM, PGD, and EOTPGD, the performance of MHNs is slightly better than that of PCNs, although the gap between them is relatively small compared to the difference between either model and CRHNs. As $\epsilon$ increases, both models show a more gradual increase in reconstruction error, indicating weaker robustness to adversarial perturbations. For attackers such as DIFGSM, NIFGSM, MIFGSM, and BIM, at smaller values of $\epsilon$, PCNs performs slightly worse than MHNs. However, as $\epsilon$ increases (e.g., to 0.5), MHNs becomes slightly worse than PCNs, although the difference remains small.

In summary, Figure~\ref{fig:retrieval_dynamics_eps_250} shows that all three associative memory models provide some degree of perturbation suppression, but CRHNs are substantially more stable under increasing attack intensity. This result is consistent with the attractor-based retrieval mechanism of CRHNs, which preserves stored patterns more effectively under adversarial perturbations.

\subsection{Ablation Study}

Table~\ref{tab:ablation_combined} presents the ablation results comparing the full CRHNs with a variant without the RHNs module under Gaussian noise and partial masking. Since both models share the same convolutional encoder–decoder and latent interface, this comparison isolates the contribution of the RHNs.

From the results, it can be observed that both models achieve comparable performance when the memory size is small (e.g., 50 and 100 images), particularly under mild perturbations. This suggests that the convolutional encoder–decoder alone is sufficient to reconstruct simple or weakly corrupted patterns.

However, as the memory size increases, a clear performance gap emerges. For memory sizes of 250 and 500, the full CRHNs consistently achieves significantly lower reconstruction error than the ablated model, especially under strong perturbations. For example, under Gaussian noise with memory size 500, the reconstruction error of CRHNs are substantially lower than that of the model without RHNs. A similar trend is observed under masking, where the performance degradation of the ablated model becomes more pronounced.

These results indicate that the RHNs module play a critical role in maintaining robustness when the memory space becomes more complex or when the input is severely corrupted. While the encoder–decoder provides a meaningful latent representation, it lacks the ability to actively correct errors. In contrast, the RHNs introduces iterative attractor dynamics that refine the latent state and guide it toward stored patterns.

The ablation study demonstrates that the robustness of CRHNs cannot be attributed solely to the convolutional representation. Instead, it arises from the combination of structured encoding and attractor-based retrieval, with the RHNs serving as the key mechanism for error correction and stability.

\section{Conclusion and Future Work}\label{sec:conclusion}

In this paper, we proposed the CRHN, a novel auto-associative memory architecture that integrates convolutional representations with structured attractor dynamics. By combining a convolutional encoder–decoder with a recurrent RHN and a discrete latent interface, the proposed model enables efficient pattern completion on high-dimensional data while preserving the stability properties of energy-based systems.

We provided both theoretical and empirical analysis to demonstrate the effectiveness of CRHNs. From a theoretical perspective, we established the stability of the model through a Lyapunov formulation and showed that semi-orthogonal weight structures prevent the amplification of adversarial perturbations. From an experimental perspective, extensive evaluations on the STL dataset show that CRHNs consistently outperform MHNs and PCNs under a wide range of input degradations, including partial occlusion, additive noise, photometric variations, and gradient-based adversarial attacks. In particular, CRHNs exhibit strong robustness under increasing perturbation strength, maintaining stable reconstruction performance until the perturbations exceed the basin of attraction of stored patterns.

These results suggest that the robustness of CRHNs are fundamentally rooted in its attractor-based retrieval mechanism, where stored patterns correspond to stable equilibrium points in the dynamical system. The combination of structured feature extraction and stable energy dynamics enables CRHNs to effectively suppress noise and recover clean patterns from corrupted inputs.

Despite these promising results, several directions remain for future work. First, CRHNs can be extended to larger-scale datasets and more complex visual domains, such as pattern recognition and object detection, to demonstrate their applicability across diverse tasks. Second, integrating RHNs with modern deep architectures, such as Transformers or multimodal models, may provide new opportunities to combine associative memory with large-scale representation learning. Finally, a deeper theoretical investigation of the relationship between attractor geometry, subspace structure, and adversarial robustness could provide further insights into the design of robust neural systems.

\bibliographystyle{unsrt} 
\bibliography{reference} 

@inproceedings{coates2011analysis,
	title={{An analysis of single-layer networks in unsupervised feature learning}},
	author={Coates, Adam and Ng, Andrew and Lee, Honglak},
	booktitle={Proceedings of the fourteenth international conference on artificial intelligence and statistics},
	pages={215--223},
	year={2011},
	organization={JMLR Workshop and Conference Proceedings}
}

@inproceedings{xie2019improving,
	title={{Improving transferability of adversarial examples with input diversity}},
	author={Xie, Cihang and Zhang, Zhishuai and Zhou, Yuyin and Bai, Song and Wang, Jianyu and Ren, Zhou and Yuille, Alan L},
	booktitle={Proceedings of the IEEE/CVF conference on computer vision and pattern recognition},
	pages={2730--2739},
	year={2019}
}

@inproceedings{fischer2012introduction,
	title={{An introduction to restricted Boltzmann machines}},
	author={Fischer, Asja and Igel, Christian},
	booktitle={Iberoamerican congress on pattern recognition},
	pages={14--36},
	year={2012},
	organization={Springer}
}

@article{wahlheim2025memory,
	title={{Memory updating and the structure of event representations}},
	author={Wahlheim, Christopher N and Zacks, Jeffrey M},
	journal={Trends in cognitive sciences},
	volume={29},
	number={4},
	pages={380--392},
	year={2025},
	publisher={Elsevier}
}

@inproceedings{ramsauer2020hopfield,
  title={{Hopfield Networks is All You Need}},
  author={Ramsauer, Hubert and Sch{\"a}fl, Bernhard and Lehner, Johannes and Seidl, Philipp and Widrich, Michael and Gruber, Lukas and Holzleitner, Markus and Adler, Thomas and Kreil, David and Kopp, Michael K and others},
  booktitle={International Conference on Learning Representations}
}

@article{hu2023sparse,
	title={{On sparse modern hopfield model}},
	author={Hu, Jerry Yao-Chieh and Yang, Donglin and Wu, Dennis and Xu, Chenwei and Chen, Bo-Yu and Liu, Han},
	journal={Advances in neural information processing systems},
	volume={36},
	pages={27594--27608},
	year={2023}
}

@article{yoo2022bayespcn,
	title={Bayespcn: A continually learnable predictive coding associative memory},
	author={Yoo, Jinsoo and Wood, Frank},
	journal={Advances in Neural Information Processing Systems},
	volume={35},
	pages={29903--29914},
	year={2022}
}

@inproceedings{lin2024subspace,
	title={{Subspace Rotation Algorithm for Training Restricted Hopfield Network}},
	author={Lin, Ci and Yeap, Tet and Kiringa, Iluju},
	booktitle={2024 IEEE 36th International Conference on Tools with Artificial Intelligence (ICTAI)},
	pages={740--747},
	year={2024},
	organization={IEEE}
}

@article{RETTER2020116685,
	title = {{All-or-none face categorization in the human brain}},
	journal = {NeuroImage},
	volume = {213},
	pages = {116685},
	year = {2020},
	issn = {1053-8119},
	doi = {https://doi.org/10.1016/j.neuroimage.2020.116685},
	url = {https://www.sciencedirect.com/science/article/pii/S1053811920301725},
	author = {Talia L. Retter and Fang Jiang and Michael A. Webster and Bruno Rossion},
}

@inproceedings{tang2022associative,
	title={{Associative Memory Via Covariance-Learning Predictive Coding Networks}},
	author={Tang, Mufeng and Salvatori, Tommaso and Song, Yuhang and Millidge, Beren and Lukasiewicz, Thomas and Bogacz, Rafal},
	booktitle={36th Conference on Neural Information Processing Systems (NeurIPS 2022)},
	year={2022}
}

@article{santos2025hopfield,
	title={{Hopfield-fenchel-young networks: A unified framework for associative memory retrieval}},
	author={Santos, Saul and Niculae, Vlad and McNamee, Daniel and Martins, Andr{\'e} FT},
	journal={Journal of Machine Learning Research},
	volume={26},
	number={265},
	pages={1--51},
	year={2025}
}

@article{sagodi2024back,
	title={{Back to the continuous attractor}},
	author={S{\'a}godi, {\'A}bel and Mart{\'\i}n-S{\'a}nchez, Guillermo and Sok{\'o}{\l}, Piotr and Park, Il Memming},
	journal={Advances in Neural Information Processing Systems},
	volume={37},
	pages={66856--66906},
	year={2024}
}

@inproceedings{salvatori2023feature,
	title={{Associative Memories in the Feature Space}},
	author={Salvatori, Tommaso and Millidge, Beren and Song, Yuhang and Bogacz, Rafal and Lukasiewicz, Thomas},
	booktitle={ECAI 2023: Proceedings of the 26th European Conference on Artificial Intelligence},
	publisher={IOS Press},
	year={2023}
}

@article{rolls2013mechanisms,
	title={{The mechanisms for pattern completion and pattern separation in the hippocampus}},
	author={Rolls, Edmund T},
	journal={Frontiers in systems neuroscience},
	volume={7},
	pages={74},
	year={2013},
	publisher={Frontiers Media SA}
}

@article{rolls1995model,
	title={{A model of the operation of the hippocampus and entorhinal cortex in memory}},
	author={Rolls, Edmund T},
	journal={International Journal of Neural Systems},
	year={1995}
}

@article{marr1971simple,
	author = {Marr, D.},
	title = {{Simple memory: a theory for archicortex}},
	journal = {Philosophical Transactions of the Royal Society of London. B, Biological Sciences},
	volume = {262},
	number = {841},
	pages = {23-81},
	year = {1971},
	month = {07},
	issn = {0080-4622},
	doi = {10.1098/rstb.1971.0078},
	url = {https://doi.org/10.1098/rstb.1971.0078},
	eprint = {https://royalsocietypublishing.org/rstb/article-pdf/262/841/23/336453/rstb.1971.0078.pdf},
}

@article{liu2018adv,
	title={{Adv-bnn: Improved adversarial defense through robust bayesian neural network}},
	author={Liu, Xuanqing and Li, Yao and Wu, Chongruo and Hsieh, Cho-Jui},
	journal={arXiv preprint arXiv:1810.01279},
	year={2018}
}

@inproceedings{dong2018boosting,
	title={{Boosting adversarial attacks with momentum}},
	author={Dong, Yinpeng and Liao, Fangzhou and Pang, Tianyu and Su, Hang and Zhu, Jun and Hu, Xiaolin and Li, Jianguo},
	booktitle={Proceedings of the IEEE conference on computer vision and pattern recognition},
	pages={9185--9193},
	year={2018}
}

@article{lin2019nesterov,
	title={{Nesterov accelerated gradient and scale invariance for adversarial attacks}},
	author={Lin, Jiadong and Song, Chuanbiao and He, Kun and Wang, Liwei and Hopcroft, John E},
	journal={arXiv preprint arXiv:1908.06281},
	year={2019}
}

@article{wong2020fast,
	title={{Fast is better than free: Revisiting adversarial training}},
	author={Wong, Eric and Rice, Leslie and Kolter, J Zico},
	journal={arXiv preprint arXiv:2001.03994},
	year={2020}
}

@inproceedings{lin2023basin,
	title={{On the Basin of Attraction and Capacity of Restricted Hopfield Network as an Auto-Associative Memory}},
	author={Lin, Ci and Yeap, Tet and Kiringa, Iluju},
	booktitle={2023 International Conference on Cyber-Enabled Distributed Computing and Knowledge Discovery (CyberC)},
	pages={146--154},
	year={2023},
	organization={IEEE}
}

@inproceedings{lin2025rhnrobust,
	title={{Restricted Hopfield Networks are Resilient to Adversarial Perturbations}},
	author={Lin, Ci and Yeap, Tet and Kiringa, Iluju and Zhang, Biwei},
	booktitle={2025 IEEE 7th International Conference on Cognitive Machine Intelligence (CogMI)},
	pages={75--85},
	year={2025},
	organization={IEEE}
}

@article{yeap2021implementation,
	title={Implementation of an associative memory using a restricted hopfield network},
	author={Yeap, Tet},
	journal={Global Journal of Research In Engineering},
	year={2021}
}

@article{hopfield1982neural,
  title={Neural networks and physical systems with emergent collective computational abilities},
  author={Hopfield, John J},
  journal={Proceedings of the national academy of sciences},
  volume={79},
  number={8},
  pages={2554--2558},
  year={1982},
  publisher={National Acad Sciences}
}

@article{hopfield1984neurons,
  title={Neurons with graded response have collective computational properties like those of two-state neurons},
  author={Hopfield, John J},
  journal={Proceedings of the national academy of sciences},
  volume={81},
  number={10},
  pages={3088--3092},
  year={1984},
  publisher={National Acad Sciences}
}

@article{mceliece1987capacity,
  title={The capacity of the Hopfield associative memory},
  author={McEliece, ROBERTJ and Posner, Edwardc and Rodemich, EUGENER and Venkatesh, SANTOSHS},
  journal={IEEE transactions on Information Theory},
  volume={33},
  number={4},
  pages={461--482},
  year={1987},
  publisher={IEEE}
}

@mastersthesis{halabian2021enhanced,
  title={An Enhanced Learning for Restricted Hopfield Networks},
  author={Halabian, Faezeh},
  year={2021},
  school={Universit{\'e} d'Ottawa/University of Ottawa}
}

@mastersthesis{li2021global,
  title={Global Optimization Techniques Based on Swarm-intelligent and Gradient-free Algorithms},
  author={Li, Futong},
  year={2021},
  school={Universit{\'e} d'Ottawa/University of Ottawa}
}

@article{mkadry2017towards,
	title={{Towards deep learning models resistant to adversarial attacks}},
	author={M{\k{a}}dry, Aleksander and Makelov, Aleksandar and Schmidt, Ludwig and Tsipras, Dimitris and Vladu, Adrian},
	journal={stat},
	volume={1050},
	pages={9},
	year={2017}
}

@article{storkey1999basins,
  title={The basins of attraction of a new Hopfield learning rule},
  author={Storkey, Amos J and Valabregue, Romain},
  journal={Neural Networks},
  volume={12},
  number={6},
  pages={869--876},
  year={1999},
  publisher={Elsevier}
}

@article{krotov2016dense,
  title={Dense associative memory for pattern recognition},
  author={Krotov, Dmitry and Hopfield, John J},
  journal={Advances in neural information processing systems},
  volume={29},
  year={2016}
}

@article{krotov2018dense,
  title={Dense associative memory is robust to adversarial inputs},
  author={Krotov, Dmitry and Hopfield, John},
  journal={Neural computation},
  volume={30},
  number={12},
  pages={3151--3167},
  year={2018},
  publisher={MIT Press One Rogers Street, Cambridge, MA 02142-1209, USA journals-info~…}
}

@article{krotov2020large,
  title={Large associative memory problem in neurobiology and machine learning},
  author={Krotov, Dmitry and Hopfield, John},
  journal={arXiv preprint arXiv:2008.06996},
  year={2020}
}

@article{ren2021adversarial,
  title={{Adversarial examples: attacks and defenses in the physical world}},
  author={Ren, Huali and Huang, Teng and Yan, Hongyang},
  journal={International Journal of Machine Learning and Cybernetics},
  volume={12},
  number={11},
  pages={3325--3336},
  year={2021},
  publisher={Springer}
}

@inproceedings{newaz2020adversarial,
  title={{Adversarial attacks to machine learning-based smart healthcare systems}},
  author={Newaz, AKM Iqtidar and Haque, Nur Imtiazul and Sikder, Amit Kumar and Rahman, Mohammad Ashiqur and Uluagac, A Selcuk},
  booktitle={GLOBECOM 2020-2020 IEEE Global Communications Conference},
  pages={1--6},
  year={2020},
  organization={IEEE}
}

@inproceedings{apruzzese2019addressing,
  title={{Addressing adversarial attacks against security systems based on machine learning}},
  author={Apruzzese, Giovanni and Colajanni, Michele and Ferretti, Luca and Marchetti, Mirco},
  booktitle={2019 11th international conference on cyber conflict (CyCon)},
  volume={900},
  pages={1--18},
  year={2019},
  organization={IEEE}
}

@article{salvatori2021associative,
  title={{Associative memories via predictive coding}},
  author={Salvatori, Tommaso and Song, Yuhang and Hong, Yujian and Sha, Lei and Frieder, Simon and Xu, Zhenghua and Bogacz, Rafal and Lukasiewicz, Thomas},
  journal={Advances in Neural Information Processing Systems},
  volume={34},
  pages={3874--3886},
  year={2021}
}

@article{goodfellow2014explaining,
  title={{Explaining and harnessing adversarial examples}},
  author={Goodfellow, Ian J and Shlens, Jonathon and Szegedy, Christian},
  journal={arXiv preprint arXiv:1412.6572},
  year={2014}
}

@article{badjie2024adversarial,
  title={{Adversarial attacks and countermeasures on image classification-based deep learning models in autonomous driving systems: A systematic review}},
  author={Badjie, Bakary and Cec{\'\i}lio, Jos{\'e} and Casimiro, Antonio},
  journal={ACM Computing Surveys},
  volume={57},
  number={1},
  pages={1--52},
  year={2024},
  publisher={ACM New York, NY}
}

@incollection{kurakin2018adversarial,
  title={{Adversarial examples in the physical world}},
  author={Kurakin, Alexey and Goodfellow, Ian J and Bengio, Samy},
  booktitle={Artificial intelligence safety and security},
  pages={99--112},
  year={2018},
  publisher={Chapman and Hall/CRC}
}

@article{spratling2017review,
  title={{A review of predictive coding algorithms}},
  author={Spratling, Michael W},
  journal={Brain and cognition},
  volume={112},
  pages={92--97},
  year={2017},
  publisher={Elsevier}
}

@article{schonemann1966generalized,
  title={{A generalized solution of the orthogonal procrustes problem}},
  author={Sch{\"o}nemann, Peter H},
  journal={Psychometrika},
  volume={31},
  number={1},
  pages={1--10},
  year={1966},
  publisher={Springer}
}

@article{bao2022capacity,
  title={{The capacity of the dense associative memory networks}},
  author={Bao, Han and Zhang, Richong and Mao, Yongyi},
  journal={Neurocomputing},
  volume={469},
  pages={198--208},
  year={2022},
  publisher={Elsevier}
}

@article{munakata2004hebbian,
  title={{Hebbian learning and development}},
  author={Munakata, Yuko and Pfaffly, Jason},
  journal={Developmental science},
  volume={7},
  number={2},
  pages={141--148},
  year={2004},
  publisher={Wiley Online Library}
}

@article{gardner1988space,
  title={{The space of interactions in neural network models}},
  author={Gardner, Elizabeth},
  journal={Journal of physics A: Mathematical and general},
  volume={21},
  number={1},
  pages={257},
  year={1988},
  publisher={IOP Publishing}
}

@article{gardner1988optimal,
  title={{Optimal storage properties of neural network models}},
  author={Gardner, Elizabeth and Derrida, Bernard},
  journal={Journal of Physics A: Mathematical and general},
  volume={21},
  number={1},
  pages={271},
  year={1988},
  publisher={IOP Publishing}
}

@article{tang2023recurrent,
  title={{Recurrent predictive coding models for associative memory employing covariance learning}},
  author={Tang, Mufeng and Salvatori, Tommaso and Millidge, Beren and Song, Yuhang and Lukasiewicz, Thomas and Bogacz, Rafal},
  journal={PLoS computational biology},
  volume={19},
  number={4},
  pages={e1010719},
  year={2023},
  publisher={Public Library of Science San Francisco, CA USA}
}

@article{tscshantz2023hybrid,
  title={{Hybrid predictive coding: Inferring, fast and slow}},
  author={Tscshantz, Alexander and Millidge, Beren and Seth, Anil K and Buckley, Christopher L},
  journal={PLoS Computational Biology},
  volume={19},
  number={8},
  pages={e1011280},
  year={2023},
  publisher={Public Library of Science San Francisco, CA USA}
}

\end{document}